\colorlet{shadecolor}{yellow}
\newtheorem*{remark}{Remark}
\newtheorem{theorem}{Theorem}
\newtheorem{assumption}{Assumption}
\newif\ifuseboldmathops
\newif\ifuseittextabbrevs
	\newcommand{\Expect}{\mathop{\bf E{}}\nolimits}
	\newcommand{\Expect}{\mathop{\mathbb{E}{}}\nolimits}
\newcommand{\argmin}{\mathop{\mathrm{argmin}}}
\newcommand{\norm}[1]{\lVert#1\rVert}
\newcommand{\calF}{\mathcal{F}}
\acrodef{mdp}[MDP]{Markov decision process}
\acrodef{dfa}[DFA]{deterministic finite-state automaton}
\acrodef{ltl}[LTL]{linear temporal logic}
\acrodef{ltlf}[LTL$(\calF)$]{quantitative linear temporal logic}
\acrodef{ag}[AG]{Assume-Guarantee}
\acrodef{ssp}[SSP]{Stochastic Shortest Path}
\acrodef{mcmc}[mcmc]{Monte Carlo Markov chain}
\theoremstyle{definition}
 \newtheorem{definition}{Definition}
\newtheorem{lemma}{Lemma}
\acrodef{ltl}[LTL]{linear temporal logic formula}
\acrodef{mdp}[MDP]{Markov decision process}
\acrodef{smdp}[Semi-MDP]{Semi-Markov decision process}
\acrodef{scltl}[scLTL]{syntactically co-safe LTL}
\newcommand{\defeq}{\mathrel{\mathop:}=}
\newtheorem{prop}{Proposition}
\begin{document}
\title{A Multi-Agent Reinforcement Learning Approach For Safe and Efficient Behavior Planning Of Connected Autonomous Vehicles}
\author{Songyang Han, Shanglin Zhou, Jiangwei Wang, Lynn Pepin, Caiwen Ding,~\IEEEmembership{Member,~IEEE,} \\ Jie Fu,~\IEEEmembership{Member,~IEEE,} Fei Miao,~\IEEEmembership{Member,~IEEE}
\thanks{This work was supported by NSF 1849246, NSF 1952096, and NSF 2047354 grants. }
\thanks{S.~Han, S.~Zhou, L.~Pepin, C.~Ding, and F.~Miao are with the Department of Computer Science and Engineering, and J.~Wang is with the Department of Electrical and Computer Engineering, University of Connecticut, Storrs Mansfield, CT, USA 06268. }
\thanks{J.~Fu is with the Department of Electrical and Computer Engineering, University of Florida, Gainesville, FL, USA 32605.}
\thanks{Email: \{songyang.han, shanglin.zhou, jiangwei.wang, lynn.pepin, caiwen.ding, fei.miao\}@uconn.edu, fujie@ufl.edu.}}

\maketitle

\begin{abstract}
The recent advancements in wireless technology enable connected autonomous vehicles (CAVs) to gather information about their environment by vehicle-to-vehicle (V2V) communication. In this work, we design an information-sharing-based multi-agent reinforcement learning (MARL) framework for CAVs, to take advantage of the extra information when making decisions to improve traffic efficiency and safety. 
The safe actor-critic algorithm we propose has two new techniques: the truncated $\mathcal{Q}$-function and safe action mapping. The truncated $\mathcal{Q}$-function utilizes the shared information from neighboring CAVs such that the joint state and action spaces of the $\mathcal{Q}$-function do not grow in our algorithm for a large-scale CAV system. 
We prove the bound of the approximation error between the truncated-$\mathcal{Q}$ and global $Q$-functions. The safe action mapping provides a provable safety guarantee for both the training and execution based on control barrier functions.
Using the CARLA simulator for experiments, we show that our approach can improve the CAV system's efficiency in terms of average velocity and comfort under different CAV ratios and different traffic densities. We also show that our approach avoids the execution of unsafe actions and always maintains a safe distance from other vehicles. We construct an obstacle-at-corner scenario to show that the shared vision can help CAVs to observe obstacles earlier and take action to avoid traffic jams. 
\end{abstract}

\begin{IEEEkeywords}
Autonomous vehicle, multi-agent reinforcement learning, convolutional neural network, control barrier function.
\end{IEEEkeywords}

\section{Introduction}
\label{sec:introduction}
\IEEEPARstart{W}{ireless} communication technologies such as WiFi and 5G cellular networks help enable vehicle-to-vehicle (V2V) communication~\cite{martin2020low,mun2021secure}. The U.S. Department of Transportation (DOT) estimated that DSRC (dedicated short-range communications)-based V2V communication could address up to 82\% of crashes in the U.S. every year~\cite{DSRC_standard,won2021platooning}. Sharing basic safety messages (BSMs) benefits the coordination of connected autonomous vehicles (CAVs) at intersections and lane-merging scenarios~\cite{Coordinate_CAV, el2022novel}.
However, when CAVs get extra environment knowledge via V2V communication, how to make prudent decisions to improve traffic efficiency, and whether sharing vision information can bring benefits are still unsolved challenges. Hence, we show how CAVs can take advantage of information sharing to make better driving decisions while meeting safety requirements and improving traffic efficiency. 

In this work, We consider utilizing V2V communication among CAVs to make better behavior planning and control decisions, such as lane-changing and lane-keeping. This problem is not well-studied. Most existing CAV frameworks assume that the lane-changing or keeping decision has already been made, such as platooning~\cite{won2021platooning}, adaptive cruise control (ACC)~\cite{farivar2021security}, and cooperative adaptive cruise control (CACC)~\cite{silgu2021combined}. 
Since the driving environment is quite complicated given the large state space for autonomous vehicles, reinforcement learning has shown advantages in literature for a single autonomous vehicle's decision making~\cite{chen2020conditional,chen2021interpretable,kiran2021deep,el2022novel,fu2022hybrid,zhu2022operational}. However, how to use reinforcement learning for multiple vehicles' decision making has not been well-studied yet~\cite{kiran2021deep}. It is natural to consider multi-agent reinforcement learning (MARL) for CAVs' decision making. Nevertheless, since a centralized critic with access to the global state and the global action is required for the MARL~\cite{lowe2017multi, rashid2018qmix}, there are three challenges, in reality, preventing us from using the existing MARL algorithms in the literature:
\begin{itemize}
    \item It is difficult for each CAV to get the global state and the global action considering the communication overheads.
    \item The joint state and action spaces of the critic (Q-function) grow combinatorially with the total number of CAVs. The computational overheads become burdensome when the total number of CAVs becomes very large.
    \item The action used for both training and execution may be unsafe for the safety-critical CAV system~\cite{muhammad2020deep}.
\end{itemize}

To tackle these challenges, we design a new algorithm, called the safe actor-critic algorithm, with two new techniques: truncated $\mathcal{Q}$-function and safe action mapping.
To the best of our knowledge, this is the first attempt to design a safe MARL algorithm to solve the behavior planning challenges for CAVs considering the dynamic control process. We design a truncated $\mathcal{Q}$-function with neighboring vehicles' states and actions to approximate the global action value function with global states and actions in the MARL. 
We design a safe action mapping algorithm based on control barrier functions (CBFs) to make sure that the actions trained and executed by our proposed MARL algorithm are safe. 
In experiments, we show that our approach increases traffic efficiency and guarantees safety.

In summary, the main contributions of this work are:
\begin{itemize}
    \item We propose a novel safe and efficient actor-critic algorithm for behavior planning of CAVs based on two new techniques: 1) \textit{Truncated $\mathcal{Q}$-function} (for the first two challenges above): Each vehicle learns a truncated $\mathcal{Q}$-function as a critic that only needs the states and actions from neighboring vehicles. The joint state and action spaces of the truncated $\mathcal{Q}$-function do not grow in a large-scale CAV system. 2) \textit{Safe action mapping} (for the third challenge above): We map any action in the action space to the safe action set so that the training and execution have provable safety guarantees.
    \item To support the learning process of the truncated $\mathcal{Q}$-function, we propose a weight-pruned convolutional neural network (CNN) technique to guarantee the images from the camera and point clouds from LIDAR can be processed fast enough such that the vision information is always available for the learning of the truncated $\mathcal{Q}$-function.
    \item We validate our algorithms in the CARLA simulator~\cite{Dosovitskiy17} that can simulate complicated mixed traffic environments including both autonomous and human-driven vehicles. The experiments show that the safe actor-critic algorithm can improve traffic efficiency with safety guarantees. We also validate our MARL algorithm in challenging driving scenarios like obstacle-at-corner, and the shared vision with our algorithm helps vehicles to avoid traffic jams.
\end{itemize}

The rest of this paper is organized as follows. We introduce the related work in Section~\ref{sec:related_work}. In Section~\ref{sec:problem}, we introduce the V2V communication setting, formulate the behavior planning and control problem, and give an overview of our novel solutions. In Section~\ref{sec:info_sharing} we propose the truncated $\mathcal{Q}$-function with provable approximation error. In Section~\ref{sec:learning_control}, we design a safe action mapping technique and propose a safe actor-critic algorithm for behavior planning. In Section~\ref{sec:experiment}, we show the experimental results with safety and efficiency improvements. Section~\ref{sec:conclusion} concludes the work.

\section{Related Work}
\label{sec:related_work}
\subsection{Deep Learning Framework For Autonomous Driving}

It is productive for autonomous vehicles to learn the steering angle and acceleration directly based on vision perception, such as end-to-end imitation learning~\cite{le2022survey}, and end-to-end reinforcement learning~\cite{cheng2019end,chen2020conditional,chen2021interpretable}.
However, end-to-end frameworks usually only show effectiveness in lane-keeping scenarios without lane-changing behaviors~\cite{bojarski2016end,cheng2019end,chen2020conditional,chen2021interpretable}. Moreover, the end-to-end frameworks do not have a provable safety guarantee. We also show in our experiments that vehicles have zigzag trajectories using end-to-end learning without explicitly deciding to change lanes but directly deciding steering angle and acceleration.
Therefore, the end-to-end framework does not fit our CAV problem.

When considering lane-changing behaviors, it is popular to separate the learning and control modules~\cite{shalev2016safe,aradi2020survey,he2021rule,zhang2022unified}. The learning module makes a high-level decision, such as ``go straight", ``go left"~\cite{aradi2020survey}, or ``yield to another vehicle"~\cite{shalev2016safe}. Then the control module implements the high-level decision with a safety guarantee~\cite{aradi2020survey}. Therefore, we design a discrete action space with continuous control inputs to execute the lane-keeping or lane-changing maneuver in our proposed MARL algorithm. Existing literature only considers the learning and control for a single autonomous vehicle, while in this work we solve the more challenging learning and control problem for a multi-vehicle system.
 
\subsection{Multi-agent Reinforcement Learning}
Existing multi-agent reinforcement learning (MARL) literature~\cite{zhang2019multi} has not fully solved the challenges for CAVs. How communication among agents will improve systems' safety or efficiency in policy learning has not been addressed. Recent advancements like multi-agent deep deterministic policy gradient (MADDPG)~\cite{lowe2017multi}, the attention mechanism~\cite{iqbal2019actor}, cooperative MARL~\cite{rashid2018qmix,sunehag2018value,yu2019distributed,rashid_nips20,NEURIPS2020_8977ecbb} and league training~\cite{vinyals2019grandmaster}, do not specify communication among agents or safety guarantees for the learned policy. A recent MARL work designs a scalable actor-critic algorithm for networked systems~\cite{qu2020scalable}, but the method cannot be applied to physical systems like CAVs or provide critical safety guarantees. Moreover, their localized policy only relies on the local state, while in our design the localized policy utilizes the information sharing capability of CAVs and the shared states from neighbors. Therefore, we consider a novel problem of CAVs' planning with information sharing and design a safe MARL algorithm with a scalable critic function and provable safety guarantees. 

\subsection{Safe Reinforcement Learning}
Safe reinforcement learning is an increasingly important research area for real-world safety-critical applications~\cite{bastani2018verifiable, thomas2021safe}. 
The existing safe RL methods for single-agent RL cannot be directly used to solve the challenges in MARL considered in this work. The Monte Carlo tree search method is used to search for safe actions in single-agent RL~\cite{mo2022safe}, but the search space grows combinatorially large without a formal safety guarantee in MARL. Control barrier functions (CBFs) have been applied to guarantee the safety of end-to-end learning for a single vehicle with continuous action space~\cite{cheng2019end}. However, we consider a more challenging multi-agent problem with discrete action and continuous control inputs.

The existing safe MARL methods either cannot solve the CAV challenges or interrupt the learning process of the MARL agents. Safe MARL methods mainly have two types: constrained MARL or shielding for exploration. In constrained MARL~\cite{lu2021decentralized}, agents maximize the total expected return while keeping the costs lower than designed bounds. However, the constraints in these methods cannot explicitly represent the safety requirement at every timestep of a physical dynamic system like CAVs, and safety is rarely guaranteed during learning in practice. The model predictive shielding (MPS) algorithm provably guarantees safety for any learned MARL policy~\cite{li2020robust, zhang2019mamps}. The basic idea is to use a backup controller to override the learned policy by dynamically checking whether the learned policy can maintain safety. 
However, this overriding interrupts the learning process. Our proposed algorithm maps any action in the action space to a safe action such that the RL agent can keep learning without being interrupted. Recent work proposes a joint learning method of learning the controller and the control barrier certificates with policy refinement~\cite{qin2021learning}. But they use supervised learning and they cannot guarantee the learned control barrier certificates work well for different datasets. In contrast, we design a safe action mapping algorithm to guarantee the safety of both training and execution of MARL considering vehicles' physical dynamics.

\section{Problem Description and Solution Overview}
\label{sec:problem}
We consider behavior planning (such as deciding when to change or keep lanes) and control of CAVs to improve traffic efficiency with safety guarantees. The driving environment is a multiple-lane freeway mixed with CAVs and human-driven vehicles, and our decision-making framework is designed for the CAVs. Human-driven vehicles are considered part of the environment and can be observed by the onboard sensors.

We assume that each CAV receives the following information from its neighboring CAVs by V2V communication:
\begin{itemize}
    \item Each neighboring CAV's position, velocity, and acceleration.
    \item Each neighboring CAV's vision information (detection results) captured by the onboard camera and LIDAR sensors and processed by a convolutional neural network. 
    \item Each neighboring CAV's action in the action set \{Change Left, Change Right, Keep Lane\}.
\end{itemize}
Sharing vehicle's states and actions has already been used in many CAV applications~\cite{Coordinate_CAV, eskandarian2019research, el2022novel, berlato2022smart}, our V2V communication demand is satisfied by the recent V2V communication advances~\cite{won2021platooning,mun2021secure}.




\begin{figure}[htb]
  \centering
  \includegraphics[width=3.4in]{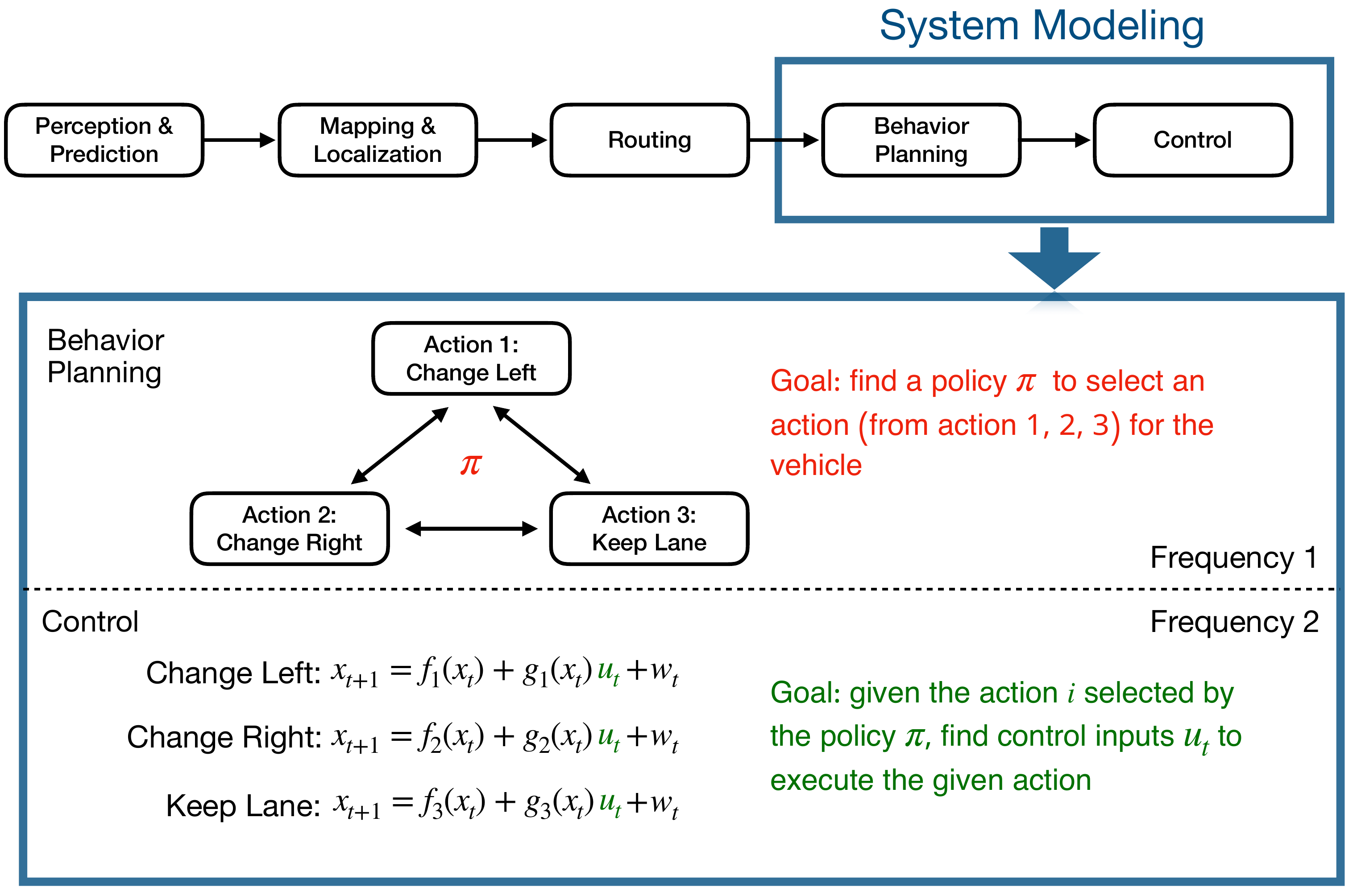}
  \caption{The system modeling of the behavior planning and control problem considered in this work. We design a novel safe behavior planning and control framework with decentralized training and decentralized execution to tackle the new challenges for CAVs.}\label{fig:system_modeling}
\end{figure}

One typical workflow for an autonomous vehicle includes perception, prediction, mapping and localization, routing, behavior planning, and control~\cite{aradi2020survey}. We focus on the last two modules: the behavior planning module to determine whether to change or keep lanes; the control module to control the steering angle and the acceleration. The system modeling is shown at the bottom of Fig.~\ref{fig:system_modeling}. In our framework, the behavior planning module selects one of the discrete actions. We aim to find a policy for each CAV to decide the action to improve traffic efficiency with safe control inputs.
For an autonomous vehicle, the behavior planning module and the control module work in different frequencies. One example is that the behavior planning module updates at 2Hz and the control module updates at 100Hz~\cite{notomista2020enhancing}. This is one reason why we
separate the behavior learning and control modules.

In summary, our goal is to design a behavior planning and control framework for CAVs to improve the transportation system's safety and efficiency. 

\subsection{Problem Formulation}
\label{sec:formulation}

In this section, we formulate the problem for the MARL-based behavior planning and the continuous space physical dynamic control module in Fig.~\ref{fig:system_modeling}.

\subsubsection{Behavior Planning}
We consider $n$ CAVs with an undirected graph $\mathcal{G} = (\mathcal{N}, \mathcal{E})$ as the communication connections. The nodes $\mathcal{N} = \{1, ..., n\}$ is the set of CAVs and $\mathcal{E} \subset \mathcal{N} \times \mathcal{N}$ is the edge set. Each vehicle $i$ is associated with an action $a^i \in \mathcal{A}^i$ and a state $s^i \in \mathcal{S}^i$. The action set $\mathcal{A}^i$ is \{Keep Lane (KL), Change Left (CL), and Change Right (CR)\}. The state $s^i$ of each vehicle is \{position, velocity, acceleration, vision information\}, where the vision information is captured by the onboard camera and LIDAR sensors and processed according to the weight-pruned CNN to be introduced in Section~\ref{sec:cnn_vision}. The global state is $ s = (s^1, ..., s^n) \in \mathcal{S} \defeq \mathcal{S}^1 \times \cdots \times \mathcal{S}^n$. The global action is $ a = (a^1, ..., a^n) \in \mathcal{A} \defeq \mathcal{A}^1 \times \cdots \times \mathcal{A}^n$.

Each vehicle has a stage-wise reward function $r^i(s^i, a^i)$ that depends on its local state and action. The global stage-wise reward is $r(s,a) = \frac{1}{n} \sum_{i=1}^n r^i(s^i, a^i)$. Each vehicle is associated with a localized policy $\pi^i(a^i| s^{N^i})$ given the joint states $s^{N^i}$ of its neighbors (including itself). We use $\pi(a | s)$ to denote the joint policy of $n$ CAVs.
The action-value function for the joint policy $\pi$ is
\begin{align}
    & Q(s, a) = \Expect_{a_k \sim \pi(a|s)} \left[ \sum_{k=0}^\infty \gamma^k r_{k+1}(s_k, a_k) | s_0 = s, a_0 = a \right] \nonumber \\
    &= \frac{1}{n} \sum_{i=1}^n \Expect_{a_k \sim \pi(a |s)} \left[ \sum_{k=0}^\infty \gamma^k r^i_{k+1}(s^i_k, a^i_k) | s_0 = s, a_0 = a \right] \nonumber \\
    & \defeq \frac{1}{n} \sum_{i=1}^n Q^i(s, a),
    \label{equ:q_function}
\end{align}
where $Q^i(s, a)$ is the action value function for each vehicle $i$ and $\gamma \in (0,1)$ is a discount factor. 

The objective is to find a policy $\pi$ to maximize the total expected return of $n$ CAVs:
\begin{equation}
    G = \Expect_{s_0 \sim p_0} \Expect_{a_k \sim \pi(a|s)} \left[ \sum_{k=0}^\infty \gamma^k r_{k+1}(s_k, a_k) \mid s_0 \right],
\end{equation}
where $p_0$ is the initial state distribution. 

\subsubsection{Control}
As shown in Fig.~\ref{fig:system_modeling}, we consider the physical dynamics for each vehicle $i$ in the control affine form:
\begin{equation}
    x_{t+1} = f(x_t) + g(x_t) u_t + w_t. \footnote{The controller uses the same design for each vehicle $i$. We drop the superscript $i$ in control states and control inputs when there is no confusion.}
    \label{equ:low_sys_equation_with_noise}
\end{equation}
with $f$ and $g$ locally Lipschitz~\footnote{A function $f:\mathbb{R}^{n_x} \rightarrow \mathbb{R}$ is called Lipschitz continuous over $\mathcal{X}$ if and only if there exists a constant $K \geq 0$ such that for any $x_1, x_2 \in \mathcal{X}$, 
\begin{equation}
    \left| f(x_1)- f(x_2) \right| \leq K \left| x_1 - x_2 \right|. \nonumber
\end{equation}
A function $f:\mathbb{R}^{n_x} \rightarrow \mathbb{R}$ is called locally Lipschitz continuous if for every $x \in \mathcal{X}$ there exists a neighborhood $\mathcal{X}_x$ of $x$ such that $f$ is Lipschitz continuous over $\mathcal{X}_x$.}, $x \in \mathcal{X} \subset \mathbb{R}^{n_x}$ is the control state and $u \in \mathcal{U} \subset \mathbb{R}^{m_u}$ is the control input, $w$ is the bounded noise that satisfies $\norm{w_t} \leq W$ for all $t \geq 0$.

We want to find control inputs such that the control state $x_t$ is always in a safe set $\mathcal{C} \subseteq \mathcal{X}$. In other words, the ego vehicle stays within a given range or a bounding box of its planned trajectory and does not collide with other vehicles or obstacles.



Note that we use $t$ and $k$ to distinguish between the controller timestep and reinforcement learning timestep. We want to find a policy $\pi(a_k|s_k)$ to maximize the total expected return $G$; meanwhile, there should exist control inputs $u_t \in \mathcal{U}$ for each vehicle to execute the action and guarantee that $x_t \in \mathcal{C}$ for all $t \geq 0$ given the initial state $x_0 \in \mathcal{C}$.


\subsection{Our Novel Solution Overview}

\begin{figure}[h]
  \centering
  \includegraphics[width=3.4in]{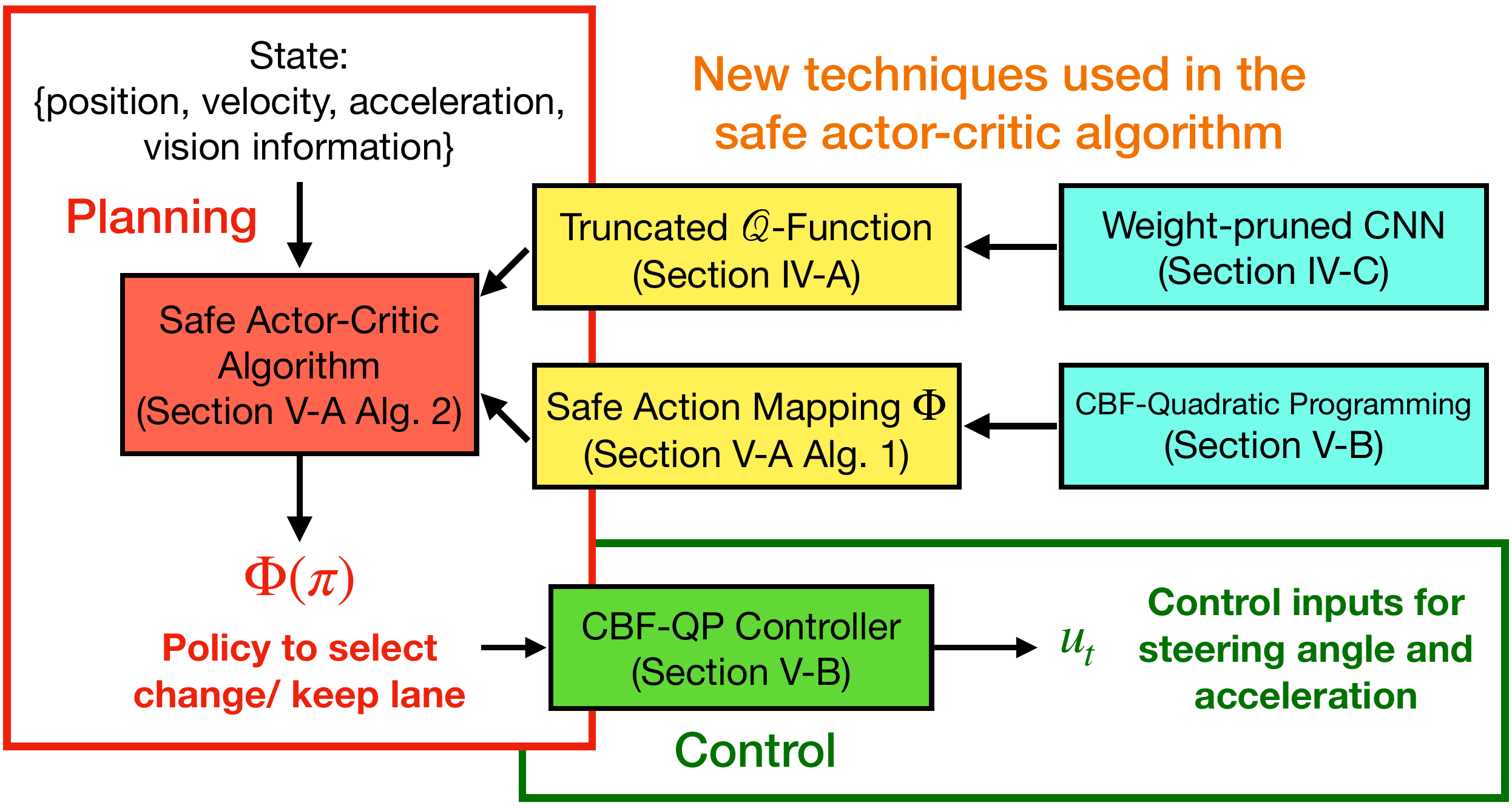}
\vspace{-5pt}
  \caption{Our proposed learning and control framework for the behavior planning and control problem. We design a safe actor-critic MARL algorithm to learn a policy to select actions. We use two new techniques in our algorithm: truncated $\mathcal{Q}$-function and safe action mapping. We also introduce a CBF-QP controller to generate control inputs for steering angle and acceleration with provable safety guarantees.}
  \label{fig:learning_control_design}
\end{figure}

In this work, we design a novel behavior learning and control framework with \textit{a safe actor-critic algorithm for decentralized training and decentralized execution} as shown in Fig.~\ref{fig:learning_control_design}. We assume that all CAVs share their states (including weight-pruned CNN processed vision information) and actions with others. Each CAV uses minibatch gradient descent to learn the truncated $\mathcal{Q}$-function as a critic to be introduced in Section~\ref{sec:info_sharing}. Then each CAV learns a localized policy $\Phi(\pi(a^i|s^{N^i}))$ based on the learned critic with a safe action mapping $\Phi$ to be introduced in Section~\ref{sec:learning_control}. The details of this safe actor-critic algorithm will be introduced in Section~\ref{sec:safe_algorithm}.

\section{Truncated $\mathcal{Q}$-Function}
\label{sec:info_sharing}




In this section, we introduce the design of truncated action value function $\mathcal{Q}$ to solve the first two challenges mentioned in the introduction such that the training process utilizes the information sharing capability of neighboring CAVs instead of relying on the global states and actions of all agents. Moreover, the joint state and action spaces of the truncated $\mathcal{Q}$-function do not grow with the total number of CAVs.

We first define a truncated $\mathcal{Q}$-function in subsection $A$ to approximate the global $Q$-function and prove that the approximation error is bounded in Theorem~\ref{lemma:exponential_decay} and Theorem~\ref{lemma:truncatedQ}. This is one main result of this work. Then, we introduce the truncated $\mathcal{Q}$-network design. At last, we introduce how to overcome the challenges of sharing vision information (required in the input of truncated $\mathcal{Q}$-function) using the weight-pruned CNN.

\subsection{Truncated $\mathcal{Q}$-function Approximation}
\label{sec:truncated_Q_approximation}

To tackle the new challenges for the CAVs, we introduce how to use the truncated $\mathcal{Q}$-function to approximate the global $Q$-function. The key idea is that the further the vehicles are away from the ego vehicle~\footnote{We refer to the vehicle we focus on (vehicle $i$) as the ego vehicle.}, the less impact they will have on the ego vehicle. To illustrate this idea, we first introduce one assumption:
\begin{assumption}
\label{aspt:independence}
At each time instant $k$, each vehicle $i$'s next state $s^i_{k+1}$ is independently generated and only depends on its neighbors, hence,
\begin{equation}
\label{equ:state_transition}
    P(s_{k+1} | s_k, a_k) = \prod_{i=1}^{n} P(s^i_{k+1} | s^{N^i} _k, a^{N^i} _k),
\end{equation}
where $N^i$ means the 1-hop neighborhood of vehicle $i$ including itself, $s^{N^i}$ is the joint states of $N^i$, and $a^{N^i}$ is the joint actions of $N^i$. 
\end{assumption}
This assumption follows the idea that vehicles far away from the ego vehicle do not have a direct influence on the ego vehicle. Then we introduce the definition of the exponential decay property. We use $N_\kappa^i$ to denote the $\kappa$-hop neighborhood of vehicle $i$ for $\kappa \geq 1$ (including all vehicles whose graph distance to $i$ is less than or equal to $\kappa$). We use $N_\kappa^{-i}$ to denote the vehicles that are outside of vehicle $i$'s $\kappa$-hop neighborhood. Then the global state $s$ can be divided into two parts $(s^{N^i_\kappa}, s^{N^{-i}_\kappa})$. Similarly, we divide the global action $a$ into $(a^{N^i_\kappa}, a^{N^{-i}_\kappa})$. The exponential decay property is defined for the global $Q^i$-function as follows.
\begin{definition}[Exponential decay property~\cite{qu2020scalable}]
\label{def:exponential_decay}
The $(c,\rho)$-exponential decay property holds for the global $Q^i$-function in \eqref{equ:q_function} if there exists some $c > 0$ and $ 0<\rho <1$ such that for any $i\in \mathcal{N}, \allowbreak \forall s^{N^i_\kappa} \in \mathcal{S}^{N^i_\kappa}, \allowbreak \forall s^{N^{-i}_\kappa} \in \mathcal{S}^{N^{-i}_\kappa}, \allowbreak \forall s'^{N^{-i}_\kappa} \in \mathcal{S}^{N^{-i}_\kappa}, \allowbreak \forall a^{N^i_\kappa} \in \mathcal{A}^{N^i_\kappa}, \allowbreak \forall a^{N^{-i}_\kappa} \in \mathcal{A}^{N^{-i}_\kappa}, \allowbreak \forall a'^{N^{-i}_\kappa} \in \mathcal{A}^{N^{-i}_\kappa}$, it holds that $|Q^i(s^{N^i_\kappa}, s^{N^{-i}_\kappa}, a^{N^i_\kappa}, a^{N^{-i}_\kappa})- Q^i(s^{N^i_\kappa}, s'^{N^{-i}_\kappa}, a^{N^i_\kappa}, a'^{N^{-i}_\kappa})| \le c \rho^{\kappa}$.
\end{definition}

This property shows that the impact of the states and the actions of the $\kappa$-hop neighborhood on the global $Q^i$-function decreases exponentially as $\kappa$ increases. The following Theorem~\ref{lemma:exponential_decay} shows that the exponential decay property holds for the global $Q^i$-function in our CAV MARL problem. The proof of this theorem is postponed to Appendix~\ref{sec:exponential_decay}. This property is utilized for CAVs to get rid of the dependence on the global states and actions.


\begin{theorem}
\label{lemma:exponential_decay}
If for all $i \in \mathcal{N}$, the reward $r^i$ is upper bounded by $\bar{r}$, then the $(\frac{\bar{r}}{1- \gamma}, \sqrt{\gamma})$-exponential decay property holds for $Q^i$ in \eqref{equ:q_function} under Assumption~\ref{aspt:independence}, where $\gamma$ is the discount factor.
\end{theorem}

\begin{proof}
See Appendix~\ref{sec:exponential_decay}.
\end{proof}

Theorem~\ref{lemma:exponential_decay} shows that with the exponential decay property, the states and actions of vehicles far away have limited contribution to estimating the global $Q^i$-function for the agent $i$. Based on this property, we define a truncated $\mathcal{Q}$-function as follows to approximate the global $Q^i$-function.
\begin{definition}[Truncated $\mathcal{Q}$-function]
\label{def:truncatedQ}
The truncated $\mathcal{Q}$-function for each vehicle $i$ only takes in the states and actions of the $\kappa$-hop neighborhood of vehicle $i$ as follows:
\begin{equation}
\mathcal{Q}^i(s^{N^i_\kappa}, a^{N^i_\kappa}) \defeq Q^i(s^{N^i_\kappa}, \mathbf{0}, a^{N^i_\kappa}, \mathbf{0}),
\end{equation}
where $s^{N^{-i}_\kappa}$ and $a^{N^{-i}_\kappa}$ are selected as $\mathbf{0}$~\footnote{We assume that $\mathbf{0}$ represents a valid state or action in $\mathcal{S}^{N^{-i}_\kappa}$ and $\mathcal{A}^{N^{-i}_\kappa}$.}.
\end{definition}



The benefit of using the truncated $\mathcal{Q}$-function to approximate the global $Q$-function is that the truncated $\mathcal{Q}$-function only needs the states and actions of the $\kappa$-hop neighborhood of vehicle $i$ instead of the global states and global actions. The joint state and action spaces of the truncated $\mathcal{Q}$-function do not grow with the total number of CAVs. We have the following theorem to give a bound of the approximation error by using the truncated $\mathcal{Q}$-function.

\begin{theorem}
\label{lemma:truncatedQ}
If for all $i\in \mathcal{N}$, the reward $r^i$ is upper bounded by $\bar{r}$, then, under Assumption~\ref{aspt:independence}, it holds that for all $s \in \mathcal{S}$ and $a\in \mathcal{A}$,
\begin{equation}
    \left|\mathcal{Q}^i(s^{N^i_\kappa}, a^{N^i_\kappa})- Q^i(s, a) \right| \le \frac{\bar{r}}{1- \gamma} \gamma^{\frac{\kappa}{2}}.
\end{equation}
\end{theorem}
\begin{proof}
Since $\forall i \in \mathcal{N}$, $r^i \le \bar{r}$, the $(\frac{\bar{r}}{1- \gamma}, \sqrt{\gamma})$-exponential decay property holds using Theorem~\ref{lemma:exponential_decay}. According to Definition~\ref{def:exponential_decay}, $\forall i \in \mathcal{N}, \forall s^{N^i_\kappa} \in \mathcal{S}^{N^i_\kappa}, \forall (s^{N^{-i}_\kappa}, s^{N^{-i}_\kappa}_*) \in \mathcal{S}^{N^{-i}_\kappa}, \forall a^{N^i_\kappa} \in \mathcal{A}^{N^i_\kappa}, \forall (a^{N^{-i}_\kappa}, a^{N^{-i}_\kappa}_*) \in \mathcal{A}^{N^{-i}_\kappa}$, we have $|Q^i(s^{N^i_\kappa}, s^{N^{-i}_\kappa}_*, a^{N^i_\kappa}, a^{N^{-i}_\kappa}_*) - Q^i(s^{N^i_\kappa}, s^{N^{-i}_\kappa}, a^{N^i_\kappa}, a^{N^{-i}_\kappa})| \le \frac{\bar{r}}{1- \gamma} \gamma^{\kappa/2}.$ 
That is to say $|Q^i(s^{N^i_\kappa}, s^{N^{-i}_\kappa}_*, a^{N^i_\kappa}, a^{N^{-i}_\kappa}_*) - Q^i(s,a)| \le \frac{\bar{r}}{1- \gamma} \gamma^{\kappa/2}.$ By plugging in $s^{N^{-i}_\kappa}_*$ and $a^{N^{-i}_\kappa}_*$ as $\mathbf{0}$, we have $\left|\mathcal{Q}^i(s^{N^i_\kappa}, a^{N^i_\kappa})- Q^i(s, a) \right| \le \frac{\bar{r}}{1- \gamma} \gamma^{\kappa/2}.$
\end{proof}

Theorem~\ref{lemma:truncatedQ} shows that the approximation error of the truncated action value function $\mathcal{Q}^i(s^{N^i_\kappa}, a^{N^i_\kappa})$ is bounded by $\frac{\bar{r}}{1- \gamma} \gamma^{\kappa /2}$. As the communication technology develops, the $\kappa$ becomes larger and this approximation error decreases exponentially small. Information shared from neighbors shows benefit in getting a scalable and well approximated $\mathcal{Q}$-function for MARL-based CAV behavior learning. Results in Theorem~\ref{lemma:exponential_decay} and Theorem~\ref{lemma:truncatedQ} show that the ego vehicle only needs to get the states and actions of its neighboring vehicles rather than that of all CAVs to learn the truncated $\mathcal{Q}$-function.


\begin{figure*}[h]
  \centering
  \includegraphics[width=1.5\columnwidth]{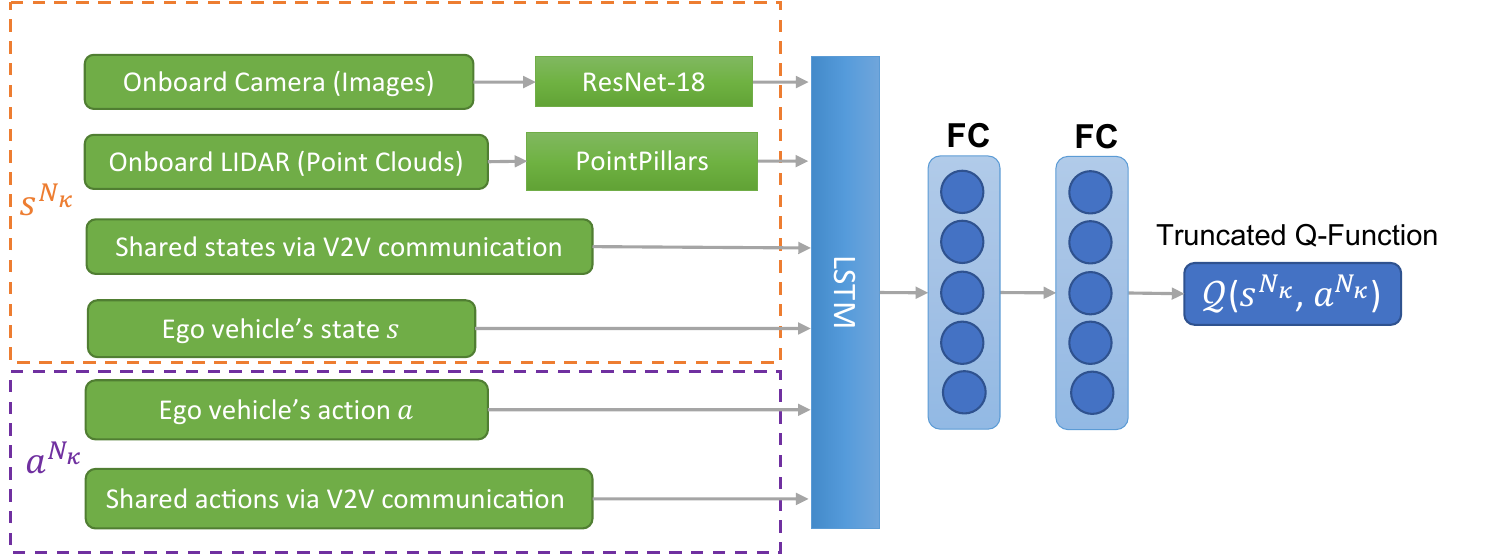}
  \vspace{-8pt}
  \caption{The truncated $\mathcal{Q}^i(s^{N^i_\kappa}, a^{N^i_\kappa})$ network for the behavior planning with the LSTM (long short-term memory) layer and FC (fully connected) layers. We use truncated $\mathcal{Q}$-function to approximate the centralized critic such that the training process utilizes the information sharing capability of CAVs instead of relying on the global states and actions.}\label{fig:Qnet}
   \vspace{-10pt}
\end{figure*}

\subsection{Truncated $\mathcal{Q}$-network}

The $\mathcal{Q}^i(s^{N^i_\kappa}, a^{N^i_\kappa})$-network we use for the behavior planning is shown in Fig.~\ref{fig:Qnet}. We consider the action set $\mathcal{A}$ includes \{Keep Lane (KL), Change Left (CL), and Change Right (CR)\}. The inputs of the truncated $\mathcal{Q}$-network include: 
\begin{itemize}
    \item Ego vehicle's vision information including onboard camera images and LIDAR data (point clouds);
    \item Shared states (including shared vision information) and actions from neighboring CAVs via V2V;
    \item Ego vehicle's state $s^i$ and action $a^i$.
\end{itemize}

The input design is based on Theorem~\ref{lemma:truncatedQ}. Since we can use truncated $\mathcal{Q}^i(s^{N^i_\kappa}, a^{N^i_\kappa})$ to approximate $Q^i(s, a)$ with bounded approximation error, the input of the $\mathcal{Q}^i(s^{N^i_\kappa}, a^{N^i_\kappa})$-network does not rely on the global $s$ and the global action $a$. Meanwhile, the joint state and action spaces do not grow even in a large-scale CAV system. In this $\mathcal{Q}$-network, we adopt the  ResNet-18~\cite{he2016deep} structure to process the images and the PointPillars~\cite{lang2019pointpillars} to process the point clouds. The implementation details will be introduced in the experiment section.


\subsection{Weight-pruned CNN For Truncated $\mathcal{Q}$-function Learning}
\label{sec:cnn_vision}
When updating the truncated $\mathcal{Q}^i(s^{N^i_\kappa}, a^{N^i_\kappa})$-network, we need the $\kappa$-hop neighborhood's states including vision information captured by onboard cameras and LIDAR sensors. Therefore, we assume all CAVs share their states by V2V communication. However, it is unrealistic for CAVs to share the raw camera images and point clouds with others due to the following two reasons: (i) the limited bandwidth of a V2V link prevents sharing the raw camera and LIDAR information among vehicles~\cite{miller2020cooperative}; (ii) the same shared information on neighboring vehicles needs to be repeatedly learned for lane information extraction, resulting in computing resource waste.

To overcome these challenges, we first process the vision information locally using CNNs and then share the extracted features with neighboring vehicles. We observe that the processing time of point cloud data is significantly longer (11.16$\times$) than that of images. The point cloud data processing becomes the ``critical path" of the overall vision process since the raw images and point clouds need to be synchronously processed and shared for behavior planning. Previous research has shown that there exists redundancy in CNN model parameters~\cite{iandola2016squeezenet,luo2017thinet}. 
Hence, we further develop a weight pruning technique to speed up the slower process. 

CNN {weight pruning} can be used to exploit the redundancy in the parameterization of deep architectures while maintaining the CNN model accuracy.  We formulate the weight pruning problem in an $N$-layer CNN as:
\begin{equation}
\small
\label{equ1}
\begin{aligned}
& \underset{ \{{\bf{W}}_{i}\}}{\text{minimize}}
& & \mathcal{F} \big( \{{\bf{W}}_{i}\}_{i=1}^N \big),
\\ & \text{subject to}
& & \mathrm{cardinality}({\bf{W}}_{i})\le l_{i}, \; i = 1, \ldots, N,
\end{aligned}
\end{equation}
where ${\bf{W}}_{i}$ represents the weights in the $i$-th layer. {$\mathcal{F}$ is the CNN loss function with respect to ${\bf{W}}_{i}$.
Here, we use the cross entropy loss as the CNN loss to measure the difference between the real label and the predicted label. We seek to minimize the loss function so that we can obtain a candidate solution that has the lowest error~\cite{ian2016deep}.} We use $l_{i}$ to represent the desired numbers of non-zero weights~\cite{zhang2018systematic}. The key idea is to represent a neural network with a simpler model through pruning the redundant weights (whose magnitudes are below a threshold). Then we retrain the CNN to fine-tune the weights of the remaining connections. 
Our vision processing method is used to extract features of both autonomous and human-driven vehicles in the mixed traffic environment. The implementation details of the weight-pruned CNN will be introduced in the experiment section. 


\section{Behavior Planning and Safety}
\label{sec:learning_control}

In this section, we introduce our main contribution, the safe actor-critic MARL algorithm, to learn a behavior planning policy $\Phi(\pi)$ to select actions. A safe action mapping $\Phi$ based on the control barrier function (CBF) is designed to solve the last challenge mentioned in the introduction to guarantee that the action explored in training and execution is safe with feasible control inputs. The novel safe actor-critic algorithm and the resulting localized policy design can utilize the unique strength of CAVs to gather more information for decision making based on V2V communication.  
We also introduce the control barrier function (CBF)-based quadratic programming (QP) controller in subsection $B$ to generate control inputs for steering angle and acceleration with provable safety.


\begin{algorithm}[h]
\SetAlgoLined
  Input: action set $\mathcal{A}$, action $a^i \in \mathcal{A}$ \;
  Output: safe action $a^i \in \mathcal{C}_{\mathcal{A}} \subseteq \mathcal{A}$ \;
  
  \While{True}
  {
    \eIf{$a^i$ is safe, i.e., the CBF-QP (introduced in subsection $B$) has a feasible solution when plugging in $a^i$} 
    {
        \Return $a^i$\;
    }
    {
        Remove $a^i$ from $\mathcal{A}$\;
        \eIf{$\mathcal{A}$ is not empty}
        {
            find action in $\mathcal{A}$ with the highest $\mathcal{Q}$ value and assign it to $a^i$\;
        }
        {
            \Return $a^i = ES$, emergency stop~\cite{khelladi2020emergency}\;
        }
    }
  }
 \caption{Safe Action Mapping $\Phi$}
 \label{alg:safe_action}
\end{algorithm}
\setlength{\textfloatsep}{1pt}

\subsection{Behavior Planning Algorithm}
\label{sec:safe_algorithm}

Since autonomous driving is a safety-critical application, the behavior policy used to explore different actions must be designed rigorously to avoid potential accidents. The action $a$ generated by the traditional policy for exploration, like the $\epsilon$-greedy method~\cite{sutton2018reinforcement}, may not be safe to execute without additional checking. For example, lane-changing may lead to collisions with neighboring vehicles. To ensure that there are no collisions during the training process, we design a safe action mapping function $\Phi$ to map any action from the action set $\mathcal{A}$ to the safe action set $\mathcal{C}_{\mathcal{A}} \subseteq \mathcal{A}$. 

\subsubsection{Safe Action Mapping}
\label{sec:safe_action}

The safe action mapping $\Phi$ is shown in Alg.~\ref{alg:safe_action}. We use a CBF-QP controller (it is to be introduced in subsection $B$) to evaluate whether an action is safe or not. If $a$ is safe, then return the safe action $a$; if not, the controller will search other actions in $\mathcal{A}$ in descending order according to their action value and find a safe one. If all the actions in $\mathcal{A}$ are not safe in the worst case, then the controller will apply the emergency stop (ES) process. 
In case when the safe action set $\mathcal{C}_\mathcal{A}$ is empty, an emergency stop will be executed, and the corresponding state will be marked unsafe and receive a large penalty. The emergency stop will only be performed in an emergency scenario where all normal actions are not safe~\cite{khelladi2020emergency}.

\subsubsection{Safe Actor-Critic Algorithm}


\begin{algorithm}[h]
\SetAlgoLined
 Randomly initialize the critic network $\mathcal{Q}^i$ and the actor network $\pi^i$ for agent $i$. Initialize target networks $\mathcal{Q}'^{i}, \pi'^{i}$. Initialize replay buffers $\mathcal{D}^i$\;
 \For {each episode}
 {
    Initialize the global state $s$\;
    \For {each timestep}
    {
        With probability $\epsilon$, select action $a^i= \Phi (\pi^i(s^{N^i}))$ for each agent $i$, where $\Phi$ is the safe action mapping in Alg.~\ref{alg:safe_action}. With probability $1 - \epsilon$, select an action $a^i$ for each agent $i$ in $\{ \Phi(a^i)|a^i \in \mathcal{A} \}$ randomly\;
        
        
        Execute actions $a = (a^1,...,a^n)$ and observe reward $r$ and new state $s'$ and set $s \leftarrow s'$\;
        
        \For {each agent $i$}
        {
            Store $(s^{N^i_\kappa}, a^{N^i_\kappa}, r^i, s'^{N^i_\kappa})$ in replay buffer $\mathcal{D}^i$\;
            
            Sample a random minibatch from $\mathcal{D}^i$\;
            
            Set $y_k^i = r^i_k + \gamma \mathcal{Q}'^{i}(s_k'^{N^i_\kappa}, a_k'^{N^i_\kappa})|_{a'^{i} = \pi'^{i}(s'^{N^i})}$\;
            
            Update critic by minimizing the loss $\mathcal{L}(\theta^i) = \frac{1}{K}\sum_k (y_{k}^i - \mathcal{Q}^{i}(s_k^{N^i_\kappa}, a_k^{N^i_\kappa}))^2$\;
            
            Update actor using the gradient $\nabla_{\theta^i}J = \frac{1}{K}\sum_k \nabla_{\theta^i} \pi^i(s^{N^i})\nabla_{a^i} \mathcal{Q}^i(s_k^{N^i_\kappa}, a_k^{N^i_\kappa})$ where $a^i = \pi^i(s^{N^i})$\;
            
            Update target networks: $\theta'^{i} \leftarrow \tau\theta^i + (1-\tau)\theta'^{i}$.
        }
    }
 }
 \caption{Safe Actor-Critic Algorithm}
 \label{alg:actor-critic}
\end{algorithm}
\setlength{\textfloatsep}{1pt}

In our behavior planning algorithm design, we use a decentralized training and decentralized execution paradigm. Each CAV learns a truncated critic $\mathcal{Q}^i(s^{N_\kappa^i}, a^{N_\kappa^i})$ and then use it to train a localized policy $\Phi (\pi^i(a^i|s^{N^i}))$ for decentralized execution. We adopt an actor-critic structure so that the actor and the critic can use different information sets. We do not derive a policy directly from the truncated $\mathcal{Q}^i(s^{N^i_\kappa}, a^{N^i_\kappa})$ function but design an actor besides the critic, because we want the policy for each CAV to only depend on its available states information excluding the $\kappa$-hop neighbors' actions. Otherwise, there will be a deadlock during execution, and CAVs will wait for each other's actions.

Our main design, the safe actor-critic algorithm, is shown in Alg.~\ref{alg:actor-critic}. We mainly design two new techniques in our algorithm:
\begin{itemize}
    \item \textit{Truncated $\mathcal{Q}$-function}: Each vehicle learns a truncated $\mathcal{Q}$-function in decentralized training such that the training process does not rely on the global states and the global actions. The joint state and action spaces of the truncated $\mathcal{Q}$-function do not grow in a large-scale CAV system.
    \item \textit{Safe action mapping}: We map any action in the action space to the safe action set so that the training and execution have provable safety guarantees.
\end{itemize}

The transition experience in Alg.~\ref{alg:actor-critic} is represented by $(s^{N_\kappa^i}, a^{N_{\kappa}^i}, r^i, s'^{N_\kappa^i})$, where $s^{N_\kappa^i}$ and $a^{N_{\kappa}^i}$ are the current states and actions of the $\kappa$-hop neighborhood including vehicle $i$, $r^i$ is the reward, $s'^{N_\kappa^i}$ is the next states of the $\kappa$-hop neighborhood including vehicle $i$. This algorithm learns a truncated critic $\mathcal{Q}^i(s^{N_\kappa^i}, a^{N_\kappa^i})$ by minimizing the Bellman loss: 
\begin{equation}
    \mathcal{L}(\theta^i) = \Expect_{s^{N_\kappa^i}, a^{N_{\kappa}^i}, r^i, s'^{N_\kappa^i}} \left[(y - \mathcal{Q}^i(s^{N_\kappa^i}, a^{N_{\kappa}^i}; \theta^i))^2 \right],
\end{equation}
where $y^i = r^i + \gamma \cdot \mathcal{Q}'^i(s'^{N_\kappa^i}, a'^{N_{\kappa}^i}; \theta'^i)|_{a'^{i} = \pi'^{i}(s'^{N^i})}$, $\gamma$ is the discount factor, $\mathcal{Q}'^i$ is the target network for the critic, $\pi'^i$ is the target network for the actor. The safe action mapping assures the policy used to produce a new transition experience is safe. The generated experience is stored in a replay buffer $\mathcal{D}^i$. When training the $\mathcal{Q}^i$-network, a mini-batch is sampled from $\mathcal{D}^i$ to decorrelate data. 
Then we use this truncated critic to train a localized actor using the following gradient
\begin{equation}
    \nabla_{\theta^i}J =  \Expect_{s^{N_\kappa^i}, a^{N_\kappa^i} \sim \mathcal{D}} \left[ \nabla_{\theta^i} \pi^i(s^{N^i})\nabla_{a^i}\mathcal{Q}^i(s_k^{N_\kappa^i}, a_k^{N_\kappa^i}) \right].
\end{equation}


\begin{remark}
The safe actor-critic algorithm in Alg.~\ref{alg:actor-critic} is scalable for a large-scale CAV system for the following two reasons: (i) The Alg.~\ref{alg:actor-critic} learns a truncated action value function $\mathcal{Q}^i(s^{N_\kappa^i}, a^{N_{\kappa}^i})$ that only requires the states and actions of vehicle $i$'s $\kappa$-hop neighborhood, i.e., the joint state and action spaces of the truncated $\mathcal{Q}$-function do not grow with the total number of CAVs; (ii) The policy $ \Phi (\pi^i (a^i | s^{N^i}))$ only depends on the local states.
\end{remark}

\subsection{Safety For the Ego Vehicle}
\label{sec:controller_safety}

This section introduces the control barrier function (CBF)-based quadratic programming (QP) controller design for the ego vehicle to find the safe control inputs for the steering angle and the acceleration. It is also used in the safe action mapping algorithm (Alg.~\ref{alg:safe_action}) to check the safety of CAVs' actions. Since the controller design is the same for each vehicle $i$, we drop the superscript $i$ when there is no confusion in this section. Before we give the formulation of the CBF-QP controller, we first introduce the concept of control barrier function, safe set, and forward invariant.
\begin{definition}[Control barrier function (CBF) \cite{agrawal2017discrete,safeRL_aaai19}]
A mapping $h: \mathcal{X} \rightarrow \mathbb{R}$ is a discrete-time exponential control barrier function for dynamic system \eqref{equ:low_sys_equation_with_noise} if
\begin{enumerate}
    \item $h(x_0) \geq 0$ and,
    \item there exists a control input $u_t \in \mathcal{U}$ and $\eta \in(0,1]$ such that for all $t \in \mathbb{N}_+$, 
    \begin{equation}
    h(f(x_t) + g(x_t) u_t + w_t) + (\eta - 1) h(x_t) \geq 0,
    \label{equ:CBF}
\end{equation}
\end{enumerate}
\end{definition}
\begin{definition}[Safe set \cite{agrawal2017discrete, ames2019control}]
\label{def:safe_set}
Denote the safe set $\mathcal{C}$ as
\begin{equation}
    \mathcal{C}: \{x \in \mathcal{X} \in \mathbb{R}^{n_x} \mid h(x) \geq 0\},
    \label{equ:safe_set}
\end{equation}
where $h: \mathcal{X} \rightarrow \mathbb{R}$ is a discrete-time exponential control barrier function.
\end{definition}
\begin{definition}[Forward invariant \cite{agrawal2017discrete, ames2019control}]
\label{def:forward_invariant}
For any initial state $x_0 \in \mathcal{C}$, if the state $x_t$ is in $\mathcal{C}$ for all $t \geq 0$, then the set $\mathcal{C}$ is said to be forward invariant. 
\end{definition}

The system is safe with respect to set $\mathcal{C}$ if $\mathcal{C}$ is forward invariant, i.e., the system state $x$ always remains within the safe set. The CBF is a condition to add the forward invariant property to the system's state $x$. For the CAV problem, we consider an affine barrier function with the form $h = p^\mathsf{T} x + q$, $(p \in \mathbb{R}^{n_x}, q \in \mathbb{R})$. This restriction means the set $\mathcal{C}$ is a polytope constructed by the intersecting of half-spaces since the safe area for a vehicle is usually represented by a bounding box \cite{cesari2017scenario}. By applying a Frenét frame, the bounding box also works for curve roads~\cite{li2022autonomous}. 

We formulate the following Quadratic Programming (QP) based on the CBF condition \eqref{equ:CBF} that can be solved efficiently at each timestep:
\begin{equation}
\begin{aligned}
& \underset{u_t, \zeta}{\argmin} & & \norm{u_t - U \cdot a}^2 + M \zeta\\
& \text{s.t.} & & p^\mathsf{T} f(x_t) + p^\mathsf{T} g(x_t) u_t + q - p^{\mathsf{T}} W \cdot \mathbf{1} \\
& & & \geq (1 - \eta) h(x_t) - \zeta \\
& & & \zeta \geq 0,
\label{equ:QP_noise}
\end{aligned}
\end{equation}
where $M$ is a large constant, $a$ is the action to change/keep lane, and $U \cdot a$ is the control reference for action $a$. For any action selected by the policy, we use state-of-the-art trajectory tracking~\cite{dixit2018trajectory,cesari2017scenario} to calculate the control references. The implementation details of the CBF-QP controller will be introduced in the experiment section for our CAV problem.



We have the following theorem to show the control inputs calculated by solving \eqref{equ:QP_noise} can guarantee the safety of the system  $x_{t+1} = f(x_t) + g(x_t) u_t + w_t$ in \eqref{equ:low_sys_equation_with_noise} with bounded noise.

\begin{theorem}
\label{theorem:safety}
When the physical dynamics of each autonomous vehicle satisfy~\eqref{equ:low_sys_equation_with_noise} with bounded noise $\norm{w_t} \leq W$, if there exists $\eta \in (0,1]$ such that the QP~\eqref{equ:QP_noise} has a solution for all $x_t \in \mathcal{C}$ and the solution satisfies $\zeta \leq Z$, then the controller derived from \eqref{equ:QP_noise} renders set $\mathcal{C}': \{x \in \mathbb{R}^{n_x} \mid h'(x) = h(x) + \frac{Z}{\eta} \geq 0\}$ forward invariant.
\end{theorem}
\begin{proof}
Since $x_t \in \mathcal{C}$, we have $h(x_t) \geq 0$ according to the Definition~\ref{def:safe_set}.
From $Z \geq \zeta \geq 0 \geq - \eta h(x_t)$, we have $h(x_t) + \frac{Z}{\eta} \geq 0$. Thus, $x_t \in \mathcal{C}'$ with $h'(x_t) \geq 0$. Also, we have
\begin{equation}
\label{equ:safety_first}
\begin{aligned}
h(x_{t+1}) &= p^\mathsf{T} x_{t+1} + q \\
&= p^\mathsf{T} f(x_t) + p^\mathsf{T} g(x_t) u_t + p^\mathsf{T} w_t + q \\
& \geq p^\mathsf{T} f(x_t) + p^\mathsf{T} g(x_t) u_t + q - p^{\mathsf{T}} W \cdot \mathbf{1} \\
& \geq (1 - \eta) h(x_t) - \zeta \geq (1 - \eta) h(x_t) - Z, \\
\end{aligned}
\end{equation}
where the first inequality follows $\norm{w_t} \leq W$ and the second inequality follows the inequality constraints in the CBF-QP~\eqref{equ:QP_noise}. Adding $Z/\eta$ to both sides of~\eqref{equ:safety_first}, we have
\begin{equation}
\begin{aligned}
h(x_{t+1}) + \frac{Z}{\eta} & \geq (1 - \eta) \left[h(x_t) + \frac{Z}{\eta} \right], \\
h'(x_{t+1}) & \geq (1 - \eta) h'(x_t) \geq 0.
\end{aligned}
\end{equation}
Thus, $x_{t+1} \in \mathcal{C}'$ and $\mathcal{C}'$ is forward invariant.
\end{proof}

The value of $Z$ denotes how large the CBF condition \eqref{equ:CBF} is violated from the original $h(x_t)$. In this case, the safety condition should be formulated according to the set $\mathcal{C}'$. 

Theorem~\ref{theorem:safety} shows that the safe set $\mathcal{C}'$ is forward invariant using the control inputs calculated by the CBF-QP in~\eqref{equ:QP_noise}, that is to say, the ego vehicle's control states are always in the safe set and the ego vehicle is guaranteed to be safe. We also use \eqref{equ:QP_noise} in Alg.~\ref{alg:safe_action} for a safety checking by plugging in the corresponding action $a$. As long as there is a feasible solution to \eqref{equ:QP_noise}, we can find control inputs for this behavior action $a$ such that the control states are always in the safe set, in other words, the action $a$ is a safe action. 
In Appendix~\ref{sec:CBF_QP}, we show how to formulate the CBF-QP for a system when we do not need to consider the noise. 

\subsection{Safety For the Multi-Agent System}
At last, we show the connection between the safety of a single agent and the multi-agent system. Note that the CBF-QP in~\eqref{equ:QP_noise} is defined for a single CAV, and $x_t$ in~\eqref{equ:QP_noise} only includes the control state for a single CAV. Theorem~\ref{theorem:safety} shows the safety guarantee of each CAV. Consider the multi-agent system with $n$ agents, the joint control state is $\mathbf{x} = (x^1, ..., x^n) \in \mathbf{X} \defeq \mathcal{X}^1 \times \cdots \times \mathcal{X}^n$, the joint control input is $\mathbf{u} = (u^1, ..., u^n) \in \mathbf{U} \defeq \mathcal{U}^1 \times \cdots \times \mathcal{U}^n$. The safe set defined in Definition~\ref{def:safe_set} can be extended for the joint control state as:
\begin{equation}
    \mathbf{C}: \{\mathbf{x} \in \mathbf{X} \in \mathbb{R}^{n \times n_x} \mid \forall i \in \mathcal{N}, h^i(x^i) \geq 0\}.
    \label{equ:jointsafe_set}
\end{equation}
The safety of a multi-agent system can be formally defined as follows:
\begin{definition}[Safety of a Multi-Agent System]
\label{def:safety_multi}
For any initial state in the safe set $\mathbf{x}_0 \in \mathbf{C}$, if the joint control state $\mathbf{x}_t$ is in $\mathbf{C}$ for all $t \geq 0$, then the multi-agent system is safe with respect to the safe set $\mathbf{C}$.
\end{definition}
We use a decentralized manner to find the safe control inputs for each agent: at each time $t$, each agent solves its own CBF-QP in~\eqref{equ:QP_noise} to find the safe control inputs for itself. Then we have the following Proposition~\ref{prop:safety_multi} to show the safety of the multi-agent system when all agents follow the CBF-QP in~\eqref{equ:QP_noise}.
\begin{prop}
\label{prop:safety_multi}
If there exists $\eta^i \in (0,1]$ such that the QP~\eqref{equ:QP_noise} for each agent $i \in \mathcal{N}$ has a solution for all $x^i_t \in \mathcal{C}^i$ and the solution satisfies $\zeta^i \leq Z^i$, then the controller derived from \eqref{equ:QP_noise} for each agent guarantees the safety of the multi-agent system with respect to $\mathbf{C}': \{\mathbf{x} \in \mathbb{R}^{n \times n_x} \mid \forall i \in \mathcal{N}, h^{i\prime}(x^i) = h^i(x^i) + \frac{Z^i}{\eta^i} \geq 0\}$.
\end{prop}
\begin{proof}
According to Theorem~\ref{theorem:safety}, for each $i \in \mathcal{N}$, the controller derived from \eqref{equ:QP_noise} for each agent renders the safe set $\mathcal{C}^{i\prime}: \{x^i \in \mathbb{R}^{n_x} \mid h^{i\prime}(x^i) = h^i(x^i) + \frac{Z^i}{\eta^i} \geq 0\}$ forward invariant. Therefore, starting from any safe joint state $\mathbf{x}_0 \in \mathbf{C}'$, the control state $x^i_t$ is within $\mathcal{C}^{i\prime}$ for all $i \in \mathcal{N}$ and all $t \geq 0$ according to the forward invariant property defined in Definition~\ref{def:forward_invariant}. Thus, the joint control state $\mathbf{x}_t$ is within the $\mathbf{C}': \{\mathbf{x} \in \mathbb{R}^{n \times n_x} \mid \forall i \in \mathcal{N}, h^{i\prime}(x^i) = h^i(x^i) + \frac{Z^i}{\eta^i} \geq 0\}$, and the multi-agent system is safe with respect to $\mathbf{C}'$.
\end{proof}
Proposition~\ref{prop:safety_multi} shows that each agent only needs to care about its own safety if all agents follow the same form of safety rules, i.e., using the CBF-QP in~\eqref{equ:QP_noise} to find control inputs, and the entire multi-agent system will be safe. This decentralized controller design can significantly improve the scalability of the CAV system and avoid using a centralized controller with a combinatorially large joint control state space.

\section{Experiment}
\label{sec:experiment} 

In this section, we first show the implementation details and the evaluation of the weight-pruned CNN in subsection A. In subsection B, we show our safe actor-critic algorithm improves traffic efficiency under different CAV ratios and different traffic densities. We also evaluate the truncated $\mathcal{Q}$-function technique in this set of experiments. Then in subsection C, we show our approach guarantees the safety of the CAVs. In subsection D, we construct an obstacle-at-corner scenario to show the benefit of the shared vision.
In all experiments, we use CARLA~\cite{Dosovitskiy17}, an open-source simulator that supports the development training, and validation of autonomous driving systems, to validate our proposed safe actor-critic algorithm. 

\begin{figure}[h]
  \centering
  \includegraphics[width=1.8 in]{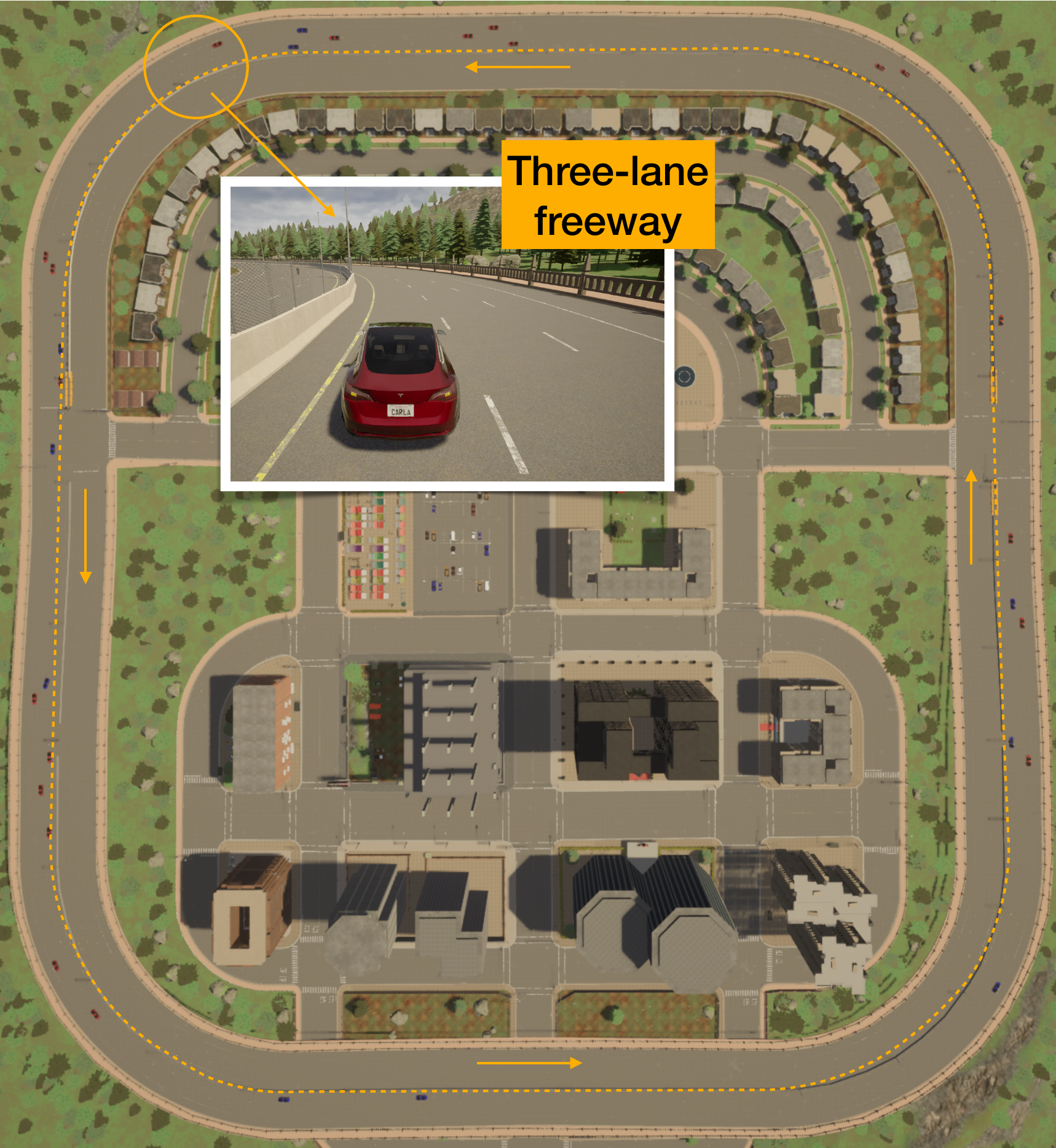}
  \vspace{-10pt}
  \caption{The example scenario of a 3-lane freeway in CARLA~\cite{Dosovitskiy17}. Vehicles are scattered on the outer loop of the map ``Town 05". The environment is mixed with both autonomous and human-driven vehicles.}\label{fig:freeway_carla}
\end{figure}

All vehicles run on a 3-lane freeway as shown in Fig.~\ref{fig:freeway_carla}. Each CAV in CARLA is integrated with a camera sensor for capturing RGB images, and a LIDAR sensor for generating point clouds. The resolution of camera image is $375\times1,242$ pixels. We set row anchors ranging from 100 to 370 with intervals of 10, and the number of grids is 100. We scale each image to $288\times800$. Each point cloud from LIDAR is stored with the 3 coordinates (the ego vehicle being the origin), representing forward, left, and up respectively, and an additional reflectance value. To augment the dataset, we apply random mirroring flip along the forward axis, and a global rotation and scaling. In evaluation, we set forward and up axes' resolution to $0.16 m$, max number of pillars to 12,000, and max number of points per pillar to 100. Images and point clouds are paired under the same timestamp and are processed jointly under the CNN. 

We assume all CAVs share their states (including vision information) and actions with others. Each CAV uses minibatch gradient descent to learn the truncated $\mathcal{Q}^i(s^{N^i_\kappa}, a^{N^i_{\kappa}})$ critic introduced in Section~\ref{sec:info_sharing}. Then each CAV learns a localized policy $\Phi(\pi^i(a^i|s^{N^i}))$ based on the learned critic with a safe action mapping $\Phi$ introduced in Section~\ref{sec:learning_control}. The detail of this safe actor-critic algorithm is introduced in Section~\ref{sec:safe_algorithm}. The hyperparameters of our safe actor-critic algorithm are shown in Table~\ref{tbl:hyperparameter}.

\begin{table}[h!]
\centering
\caption{Safe Actor-Critic Hyperparameters}
\begin{tabular}{l|c}
\toprule
\textbf{Parameter}                         & \textbf{Value}\\ 
\midrule
optimizer                         & Adam                  \\ 
learning rate                     & 0.01                  \\ 
discount factor                   & 0.9                  \\ 
replay buffer size                & $10^6$                  \\ 
hidden size in FC and LSTM layers    & 128                    \\ 
minibatch size  & 64                 \\ 
activation function                      & ReLU                  \\ 
\bottomrule
\end{tabular}
\label{tbl:hyperparameter}
\end{table}

The host machine adopted in our experiments is a server configured with Intel Core i9-10900X processors and four NVIDIA RTX2080Ti GPUs. Our experiments are performed on Python 3.7.6, GCC7 7.5, PyTorch 1.6.0, and CUDA 11.0.

\subsection{Safe Actor-Critic Algorithm Implementation Details}
\subsubsection{Reward Function}
In the implementation of our safe actor-critic algorithm, we consider a stage-wise reward for the ego vehicle to improve velocity $v^i$ and comfort $c^i$. The comfort of a vehicle (for passenger's experience) is defined based on its acceleration and action as follows:



\begin{equation}
\label{equ:comfort}
\small
    c^i(\dot{v}^i, a^i) = \begin{cases} 
			 3, & \text{if } \lvert \dot{v}^i \rvert< \Theta \text{ and } a^i = KL; \\
			2, & \text{if } \lvert \dot{v}^i \rvert \geq \Theta \text{ and } a^i = KL; \\
			 1, & \text{if } a^i = CL/CR; \\
			 0, & \text{if in } ES. \end{cases} 
\end{equation}
where $\dot{v}^i$ is the acceleration, $a^i \in \mathcal{A}$ is the MARL action, $\Theta$ is a predefined threshold.
The reward function for vehicle $i$ is defined as:
\begin{equation}
r^i(s^i, a^i) = \omega \cdot v^i + c^i(\dot{v}^i, a^i),
\label{equ:reward}
\end{equation}
where $\omega$ is a positive trade-off weight.

\subsubsection{The CBF-QP}
In the implementation of the CBF-QP controller~\eqref{equ:QP_noise}, the details of the system model are in Appendix~\ref{sec:physical_model}. The control state $x_t = [p_x, p_y, \psi, v]^\mathsf{T} \in \mathcal{X}$ includes the two coordinates of the center of gravity (CoG), its orientation and velocity. The control input $u_t = [\tan(\delta), \dot{v}] \in \mathcal{U}$ includes the tangent of the steering angle and the acceleration.
The action $a = \{a_j | a_j \in \mathcal{A}\}$ is selected based on the policy learned by the safe actor-critic algorithm. In the controller module, we use standard basis vectors in $\mathbb{R}^{|\mathcal{A}|}$ to denote each discrete action, where $|\mathcal{A}|$ is the number of total actions in the action set $\mathcal{A}$. In our problem, the three actions are represented as $(1,0,0)^\mathsf{T}, (0,1,0)^\mathsf{T}, (0,0,1)^\mathsf{T}$.


For each action selected by the behavior planning module, we can use the state-of-the-art trajectory planning methods to generate trajectories $\sigma = [\sigma_1, \sigma_2]^\mathsf{T} = [p_x, p_y]^\mathsf{T}$, such as potential fields, cell decomposition, model predictive control (MPC)~\cite{dixit2018trajectory,cesari2017scenario}. We use endogenous transformation to compute the controller's references to track trajectories:
\begin{align}
    u_{ref1} = \frac{\dot{\sigma}_1 \ddot{\sigma}_1 + \dot{\sigma}_2 \ddot{\sigma}_2}{ \sqrt{\dot{\sigma}_1^2 + \dot{\sigma}_2^2}}, \quad
    u_{ref2} = \frac{\dot{\sigma}_1 \ddot{\sigma}_2 - \dot{\sigma}_2 \ddot{\sigma}_1}{ (\dot{\sigma}_1^2 + \dot{\sigma}_2^2)^{\frac{3}{2}}}.
\end{align}
Assembling them in the matrix $U$ as follows for all actions $a \in \mathcal{A}$:
\begin{equation}
    U = \begin{bmatrix}
    u_{1,ref1} & u_{2, ref1} & \cdots & u_{|\mathcal{A}|,ref1} \\
    u_{1, ref2} & u_{2, ref2} & \cdots & u_{|\mathcal{A}|,ref2} \\
    \end{bmatrix},
    \label{equ:reference}
\end{equation}
where $u_{j,ref1}, u_{j,ref2}$ is the references for $j$-th action $a_j$. We can retrieve the references for $a_j$ by multiplying it with $U$ in the objective of the CBF-QP in Eq.~\eqref{equ:QP_noise}.

\begin{figure}[h]
  \centering
  \includegraphics[width=2.8in]{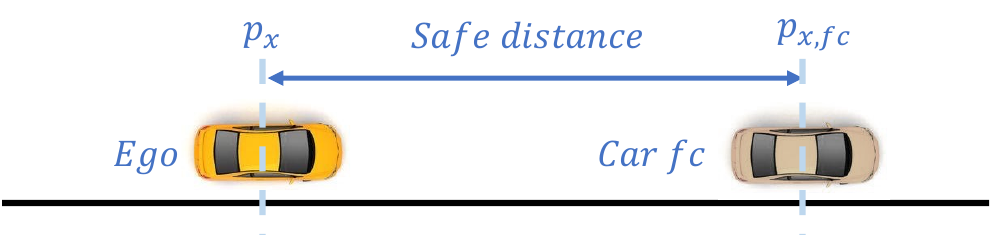}
  \caption{The ego vehicle should keep a safe distance from the front vehicle when keeping lane. To show the idea of safe distance, each vehicle is treated as a mass point at the CoG in this figure.}\label{fig:cbf_keeping}
\end{figure}

Then we show how we define the CBF for each agent. With respect to the ego vehicle, vehicle $fc$ represents the vehicle immediately in front of the ego vehicle in its \textit{current lane} as shown in Fig.~\ref{fig:cbf_keeping}.
When the action of the ego vehicle is changing lane, vehicle $bt$ represents the vehicle immediately behind the ego
vehicle in the \textit{target lane}, vehicle $ft$ denotes the vehicle immediately in front of the ego vehicle in the target lane as shown in Fig.~\ref{fig:cbf_change}. We use $\triangle p_{x,k}= \|p_x-p_{x, k}\|$ to denote the distance between ego vehicle and vehicle $k \in \{fc, ft, bt\}$. Then we define the CBF along the $p_x$-axis as follows:
when the action of the ego vehicle is keeping lane, the CBF is:
\begin{equation}
    h_{fc}(x)  =  \triangle p_{x,{fc}} - P_{safe},
\end{equation}
where $P_{safe}$ is the safe distance between the ego vehicle and other vehicles. When the action of the ego vehicle is changing lane, the CBF is: 
\begin{align}
    h_{fc}(x)  &=  \triangle p_{x,{fc}} - P_{safe}, \nonumber \\
    h_{ft}(x)  &=  \triangle p_{x,{ft}} - P_{safe},\\
    h_{bt}(x)  &=  \triangle p_{x,{bt}} - P_{safe}, \nonumber
\end{align}
For the $p_y$-axis, we use CBF to limit the ego vehicle to move within the range $[P^{min}_y, P^{max}_y]$ determined by the width of the lanes:
\begin{align}
    h_{y, max}(x)  &=  -p_y + P^{max}_{y}, \nonumber \\
    h_{y, min}(x)  &=  p_y - P^{min}_y.
\end{align}

In the implementation of the CBF-QP controller~\eqref{equ:QP_noise}, we use a vectorized CBF. When the action of the ego vehicle is keeping lane, $h(x) \allowbreak = \allowbreak [h_{fc}(x), \allowbreak h_{y, max}(x), \allowbreak h_{y, min}(x)]^\mathsf{T}.$ When the action of the ego vehicle is changing lane, $h(x) \allowbreak = \allowbreak [h_{fc}(x), \allowbreak h_{ft}(x), \allowbreak h_{bt}(x), \allowbreak h_{y, max}(x), \allowbreak h_{y, min}(x)]^\mathsf{T}.$


\begin{figure}[h]
  \centering
  \includegraphics[width=3in]{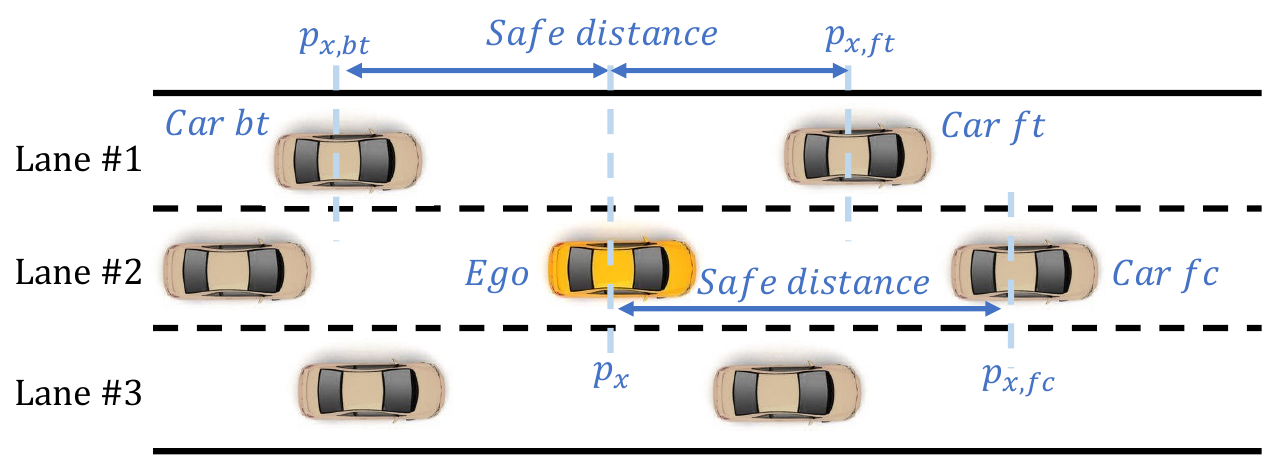}
  \vspace{-10pt}
  \caption{The ego vehicle is changing from lane \#2 to lane \#1. Its safety distance is considered on both the current lane (\#2) and target lane (\#1). In this figure, the length of each car is ignored for simplicity and the value of safe distance already considers the length of the cars.}\label{fig:cbf_change}
  \vspace{-10pt}
\end{figure}

\subsection{CNN-Driven Shared Vision}
\label{sec:vision_experiment}

\begin{figure} 
\centering \includegraphics[width=\columnwidth]{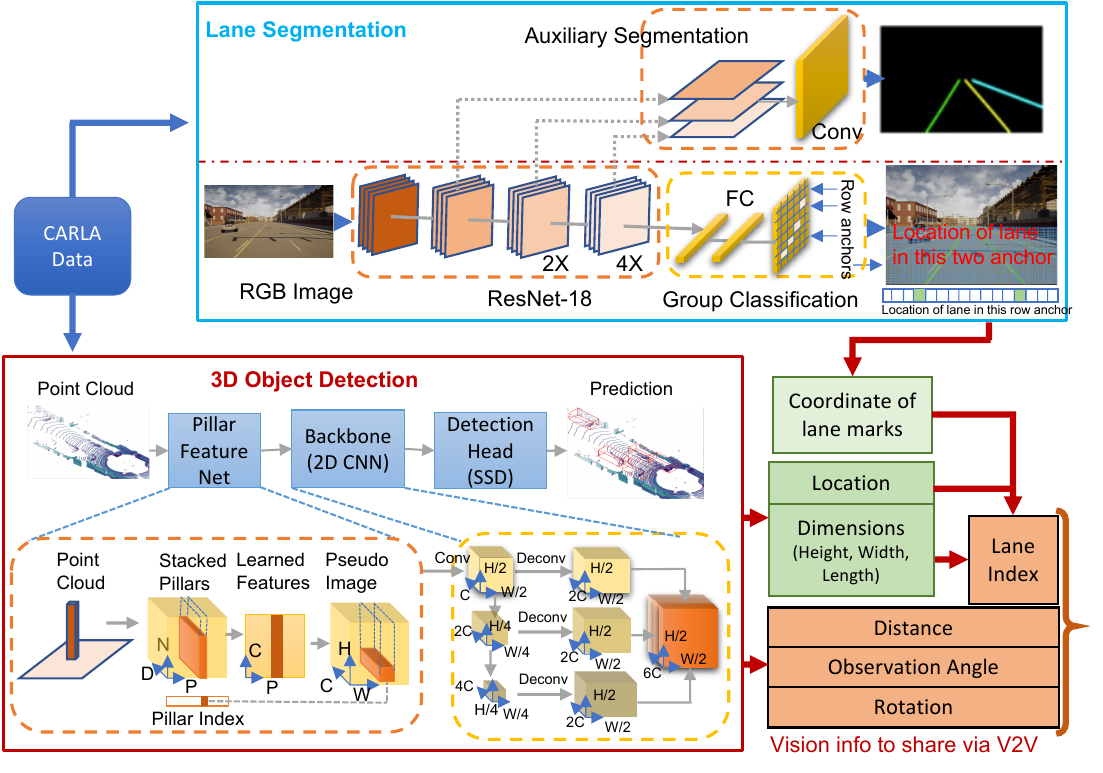}
\vspace{-15pt}
\caption{\label{vision_plot} Detailed structure of vision information processing. We combine the lane segmentation and 3D object detection results to extract neighboring vehicles' features. The processed information is included in the state of each CAV.
}
\end{figure}

We develop a CNN-driven shared vision solution to process and share the vision information to support the learning of the truncated $\mathcal{Q}$-function, as shown in Fig.~\ref{vision_plot}.
{The shared vision model has two main branches. The first one is lane segmentation and the second one is 3D object detection (OD).  We use the network architecture from Ultra-Fast-Lane-Detection~\cite{qin2020ultra} for lane segmentation. 
It aggregates auxiliary segmentation tasks to utilize multi-scale features.} {Table~\ref{net_speed} summarizes the evaluation metrics of several popular CNN backbones for object detection and segmentation that are applied on ImageNet~\cite{net_summary}, including the total number of parameters, test accuracy, and multiply-accumulate (MAC) operation (accumulate the product of two numbers in a CNN and can evaluate the computational complexity of each network~\cite{xie2017aggregated}). {We adopt ResNet-18~\cite{he2016deep} as the backbone since it not only achieves high accuracy but also keeps a small number of parameters and MAC operations.}
The lightweight feature leads the model easier to be deployed for fast inference in processing environment information. 
} 

\begin{table}[ht]
\centering
\caption{Speed and accuracy comparison of several popular CNN backbones. We adopt ResNet-18 because it achieves high accuracy with fewer parameters for fast inference.}
\label{net_speed}
\begin{tabular}{c|ccc}
\toprule
CNN Backbone        & No. of Parameters & MACs & Accuracy \\
\midrule
AlexNet~\cite{krizhevsky2012imagenet}   & 61.1M   & 0.72 & 56.55 \\
\textbf{ResNet-18}~\cite{he2016deep}    & 11.69M  & 1.82 & 71.78 \\
ResNet-50~\cite{he2016deep}    & 25.56M  & 4.12 & 76.15 \\
VGG-16~\cite{simonyan2014very} & 138.36M & 15.5 & 71.59 \\
GoogLeNet~\cite{szegedy2015going} & 42.71M  & 1.52 & 69.78 \\
\bottomrule
\end{tabular}
\end{table}

In the lane segmentation model, the RGB images from the onboard camera are divided into equal-sized grids, where grids in the same rows are defined as row anchors. Grids with the appearance of the lane mark on this row anchor will be colored green. Detailed pixel coordinates of all four-lane marks of the 3-lane freeway will be generated by the neural network and saved in a hash table. 

The 3D OD takes LIDAR point cloud as input and uses the architecture from PointPillars~\cite{lang2019pointpillars} with PointNets~\cite{qi2017pointnet} as backbone. 
The inputs are converted to sparse pseudo-images through an encoder, which can learn a set of features from the stacked pillars and then scatter those features back to 2D pseudo-images.
A network similar to~\cite{zhou2018voxelnet} is used to process the pseudo-image into high-level representation by first continuously down-sampling learned features to small spatial resolution, then up-sampling and concatenating each down-sampled feature. The detection head, single shot multibox detector (SSD)~\cite{liu2016ssd}, is used to predict 3D bounding boxes for neighboring vehicles with the features extracted from the backbone. Finally, the vision information (including the location of vehicles in camera coordinates, dimensions, observation angle, distance, and rotation) will be generated.


{As the camera RGB image and LIDAR point cloud are paired under the same timestamp, we derive the lane index of each neighboring vehicle by fusing the location and dimension of the neighboring vehicle (obtained by 3D OD) with the coordinate of lane marks obtained by lane segmentation.} We then send all vision information including lane index, distance, observation angle, and rotation info to the safe actor-critic algorithm for behavior planning.

{For evaluation, PointPillars uses mean average precision (mAP) as standard, while Ultra-Fast-Lane-Detection uses ``accuracy":} $\frac{\sum_{\text {clip}} C_{\text {clip}}}{\sum_{\text {clip}} S_{\text {clip}}}$, where $C_{clip}$ is the number of lane points predicted correctly and $S_{clip}$ is the total number of ground truth in each clip. We show the performance (accuracy or mAP) and running speed in Table~\ref{vision_result}. 
The lane segmentation achieves high accuracy with low latency. It can process $313$ images per second. The 3D object detection has a running time of $28$ images per second (img/s), which means $100$ images can be processed in around $3.6$ seconds.
Our weight pruning technique significantly increases the speed by $3.5\times$ with a very small accuracy degradation. It decreases the processing time of $100$ images from $3.6$ seconds to one second. Overall, for parallel lane segmentation and 3D object detection, our weight pruning technique significantly speeds up the CNN-driven shared vision process to achieve real-time processing.
Our vision processing method can do lane segmentation and object detection in mixed traffic with both autonomous and human-driven vehicles. 

\begin{table}[ht]
\centering
\caption{Summary of results from vision network. Our weight-pruned CNN increases the speed by $3.5\times$ with a very small accuracy degradation to satisfy the requirement of information sharing for safe MARL behavior planning.}
\label{vision_result}
\begin{tabular}{c|cc}
\toprule
CNN Model                  & performance (\%) & speed (img/s) \\
\midrule
Lane Segmentation           & accuracy: 95.82 & 313 \\
3D object detection (OD)    & mAP: 76.5       & 28 \\
3D OD after weight pruning  & mAP: 73.6       & 99 \\
\bottomrule
\end{tabular}
\end{table}

\subsection{System Efficiency Improvement}

In this section, we show our algorithm improves the CAVs' system efficiency in terms of the average velocity and the average comfort as defined in the reward function~\eqref{equ:reward}. 

\subsubsection{Comparison Under Different CAV Ratios}

Our approach improves the average velocity and the average comfort as the CAV ratios (the total CAV number divided by the total number of all vehicles) get higher. In this set of experiments, the total number of CAVs ranges from 0 to 30 as listed in Table~\ref{tbl:mixed_setting}. We compare the average velocity and comfort for all vehicles under different CAV ratios. The comfort of a single vehicle is defined in Eq.~\eqref{equ:comfort}. It is averaged among all the vehicles as one criterion. The velocity and comfort are averaged over all the 40000 timesteps used in the simulation. The result of our approach is shown at the top of Table~\ref{tbl:mixed_setting}. All CAVs use our safe actor-critic Alg.~\ref{alg:actor-critic} introduced in Section~\ref{sec:learning_control}. The result in Table~\ref{tbl:mixed_setting} shows the average velocity and comfort of the entire mixed traffic. We use CARLA's built-in human-driven vehicle in the mixed traffic~\cite{Dosovitskiy17}. In the result of Table~\ref{tbl:mixed_setting}, the average velocity and comfort increase when the CAV ratio gets higher. This gives us insights that the penetration of the CAVs can improve traffic efficiency in the future. 

The truncated $\mathcal{Q}$-function technique introduced in Section~\ref{sec:info_sharing} is a good approximation of the centralized critic $Q(s,a)$ with the global $s$ and the global $a$. We compare our approach with the MADDPG~\cite{lowe2017multi} plus our safe action mapping technique introduced in Section~\ref{sec:learning_control}. The reason we add the safe action mapping is that the MADDPG algorithm stops early in each episode due to collisions. We will further explain this in safety experiments. The MADDPG uses the centralized critic $Q(s,a)$, so we use it as a baseline to evaluate our truncated $\mathcal{Q}$-function technique. Compared with the baseline, our average velocity is 6.0\% smaller and the average comfort is 3.9\% smaller. Though the system efficiency is sacrificed a little bit, our truncated $\mathcal{Q}$-function increases the scalability since it does not rely on the global states and actions.

\begin{table}[h]
  \centering
  \caption{The system efficiency comparison under different CAV ratios. Our approach improves the average velocity and the average comfort as the CAV ratios get higher.}
  \begin{tabular}{c|cc|cc}
  \toprule
  CAV & CAV  & human-driven & average  & average  \\
  ratio & number & vehicle number & velocity (mph) & comfort\\
  \midrule
  Algorithm & \multicolumn{4}{c}{Safe Actor-Critic (Truncated $\mathcal{Q}$ + Safe Action Mapping)} \\
  \midrule
  0 & 0 & 30 & 60.06 & 2.61 \\
  0.17 & 5 & 25 & 61.82 & 2.64 \\
  0.33 & 10 & 20 & 64.70 & 2.68 \\
  0.5 & 15 & 15 & 65.14 & 2.72 \\
  0.67 & 20 & 10 & 65.18 & 2.74 \\
  0.83 & 25 & 5 & 65.53 & 2.77 \\
  1 & 30 & 0 & 66.15 & 2.81 \\
  \midrule
  Algorithm & \multicolumn{4}{c}{MADDPG + Safe Action Mapping} \\
  \midrule
  0 & 0 & 30 & 60.06 & 2.61 \\
  0.17 & 5 & 25 & 62.80 & 2.65 \\
  0.33 & 10 & 20 & 66.19 & 2.72 \\
  0.5 & 15 & 15 & 67.71 & 2.83 \\
  0.67 & 20 & 10 & 67.80 & 2.83 \\
  0.83 & 25 & 5 & 69.31 & 2.85 \\
  1 & 30 & 0 & 70.34 & 2.89 \\
  \bottomrule
  \end{tabular}
  \label{tbl:mixed_setting}
\end{table}

\subsubsection{Comparison Under Different Traffic Densities}

Our approach improves the traffic flow and average comfort under different traffic densities. The traffic density $\rho$ is the ratio between the total number of vehicles and the road length. We compare the traffic flow and average comfort under different traffic densities between our safe MARL approach using Alg.~\ref{alg:actor-critic} and an intelligent driving model (IDM)~\cite{talebpour2016influence}. The traffic flow reflects the quality of the road throughout with respect to the traffic density. It is calculated as $\rho \times \bar{v},$ where $\bar{v}$ is the average velocity of all the vehicles \cite{rios2018impact}. The IDM is a common baseline in autonomous driving. In IDM, the vehicle's acceleration is a function of its current speed, current and desired spacing, and the leading and following vehicles' speed~\cite{talebpour2016influence}. Building on top of these IDM agents, we add lane-changing functionality using the gap acceptance method in~\cite{butakov2014personalized}. In this set of experiments, we keep CAV ratios at 0.6. As shown in Fig.~\ref{fig:idm}, the safe MARL agent gets both larger traffic flow and better driving comfort when traffic density $\rho$ is low. When $\rho$ grows, the result of the safe MARL agent gets worse, but it is still comparable with the IDM. When the road is saturated, lane-changing tends to downgrade passengers' comfort but cannot bring higher speed. Consequently, a better choice is to keep lanes when $\rho$ is high, and there is no significant difference between the safe MARL and the IDM. We also add the result using MADDPG in Fig.~\ref{fig:idm}. Both the traffic flow and the driving comfort are very small (a positive number but close to 0) using MADDPG, and many collisions occurred during the simulation. This is because the MADDPG does not have a safety guarantee.

\begin{figure}[h]
  \centering
  \includegraphics[width=3.4in]{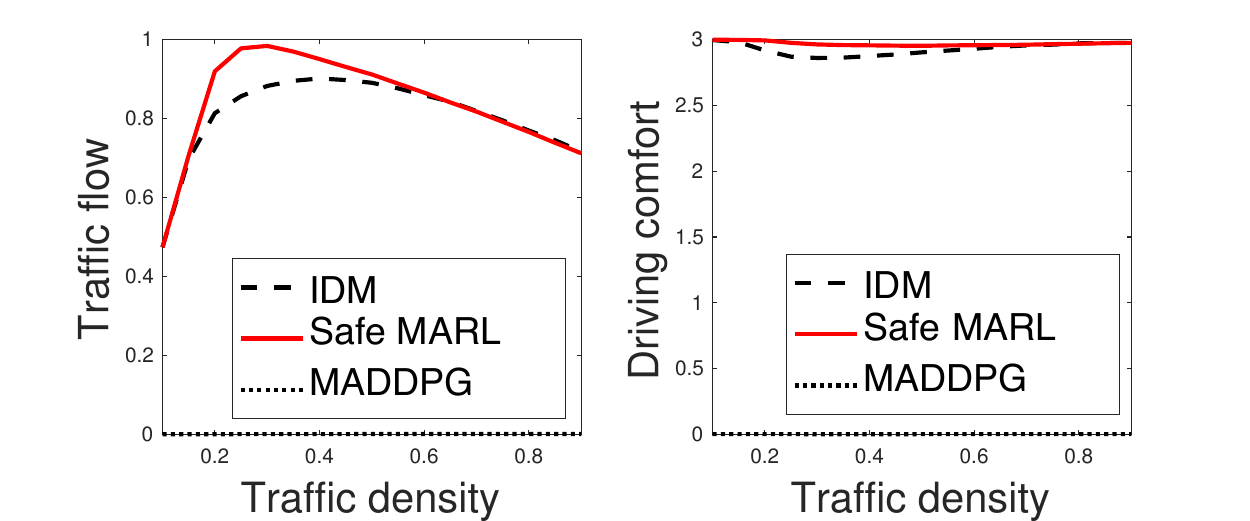}
  \caption{Our approach (safe MARL) improves the traffic flow and driving comfort under different traffic densities.}\label{fig:idm}
\end{figure}

\subsection{Safety Guarantee}

In this section, we show our algorithm has a safety guarantee. We compare our safe actor-critic algorithm with the MADDPG~\cite{lowe2017multi} algorithm. We assign a negative reward in MADDPG for each collision. Our approach avoids the execution of unsafe actions that can lead to collisions. Table~\ref{tbl:number_feedback} shows the total number of unsafe actions executed by the safe MARL and MADDPG under different traffic densities. The traffic density $\rho$ is the ratio between the total number of vehicles and the road length. The number in Table~\ref{tbl:number_feedback} is averaged over the last 10 episodes, which has a maximum timestep of 40000. When $\rho=0.9$, our approach has 0 unsafe actions while the MADDPG has 242976 unsafe actions because it does not have a safety module.

\begin{table}[h]
  \centering
  \caption{Total number of unsafe actions executed by the safe MARL and MADDPG under different traffic density $\rho$. Our approach has 0 unsafe action by using the safe action mapping.}
  \begin{tabular}{c|ccccc}
  \toprule
  $\rho$ & 0.1 & 0.2 & 0.3 & 0.4 & 0.5 \\
  \hline
  Safe MARL & 0 & 0 & 0 & 0 & 0 \\
  \hline
  MADDPG & 4416 & 9510 & 21897 & 43491 & 61689 \\
  \midrule
  $\rho$ & 0.6 & 0.7 & 0.8 & 0.9 & \\
  \hline
  Safe MARL & 0 & 0 & 0 & 0 &  \\
  \hline
  MADDPG & 91154 & 133135 & 191404 & 242976 & \\
  \bottomrule
  \end{tabular}
  \label{tbl:number_feedback}
\end{table}

Our approach can maintain a safe headway with neighboring vehicles while the MADDPG cannot. The headway is the distance between two consecutive vehicles following each other. In Fig.~\ref{fig:headway}, the minimum headway across all the vehicles is shown in the first 500 timesteps of one episode when $\rho = 0.6$. We see that the headway is always greater than 0 using our safe actor-critic algorithm with a minimum value of 18.5 meters. Nevertheless, the MADDPG is likely to have a negative headway, which means collisions in reality. Note that we set the minimum car-following distance of CAVs to be 18.5 meters following the study of the safe car-following distance of autonomous vehicles~\cite{arechiga2019specifying}, but this value can be set differently to satisfy the requirements in different scenarios.

\begin{figure}[h]
  \centering
  \includegraphics[width=3.4in]{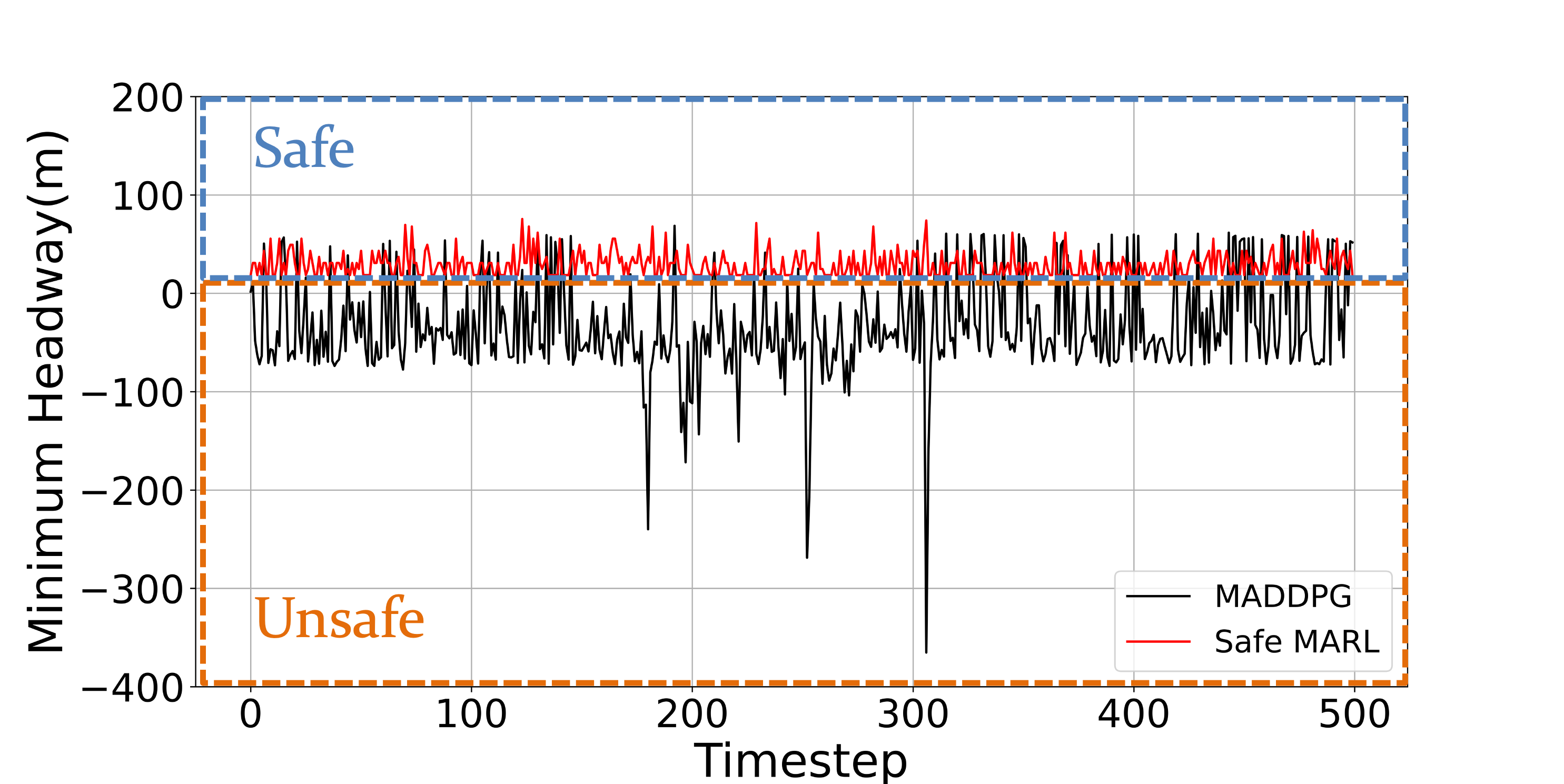}
  \vspace{-10pt}
  \caption{Our approach (Safe MARL) can maintain a safe headway with neighboring vehicles while the MADDPG cannot. The figure shows the minimum headway across all the vehicles in the first 500 timesteps of one episode when $\rho = 0.6$.}\label{fig:headway}
\end{figure}

Our approach gets a much larger total episode reward than the MADDPG. The total episode reward is the summation of all stage-wise rewards defined in Eq.~\ref{equ:reward} for each episode. As shown on the left of Fig.~\ref{fig:reward_result}, the maximum total episode reward using our approach is about 1940, which is firstly reached in the 20th episode. In the right figure, the maximum total episode reward using the MADDPG is about 7, because some collision terminates the episode. They have different initial rewards because the neural networks are randomly initialized and the action is selected by the $\epsilon$-greedy method with randomness. Our approach runs about 30 minutes in each episode as it has a safety guarantee and runs for the maximum episode length. Yet, each episode stops quickly in about 5 seconds using the MADDPG.

\begin{figure}
    \centering
    \subfigure{
    \begin{minipage}{1.6in}
    \centering
    \includegraphics[scale=0.14]{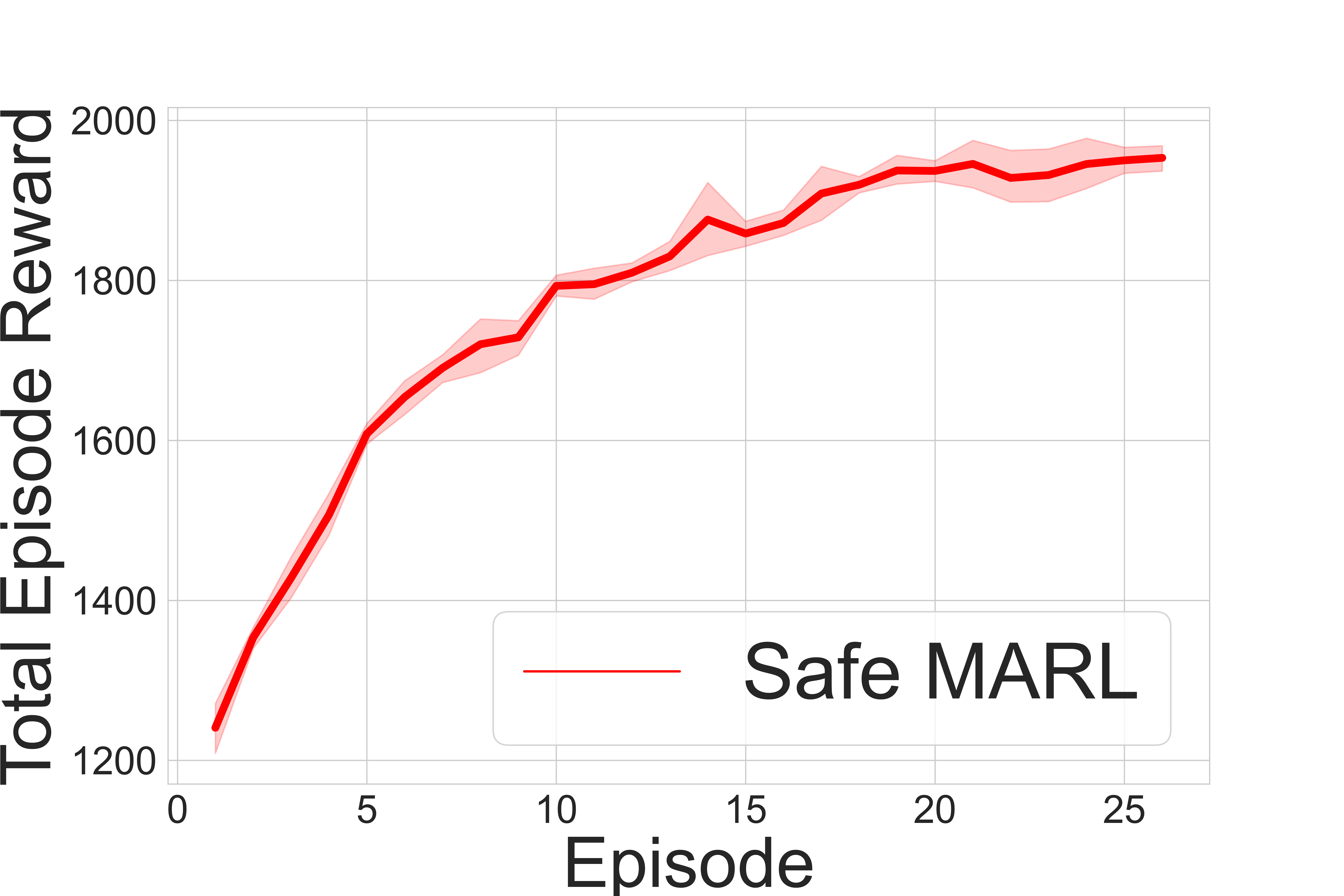}
    \end{minipage}
    }
    \subfigure{
    \begin{minipage}{1.6in}
    \centering
    \includegraphics[scale=0.14]{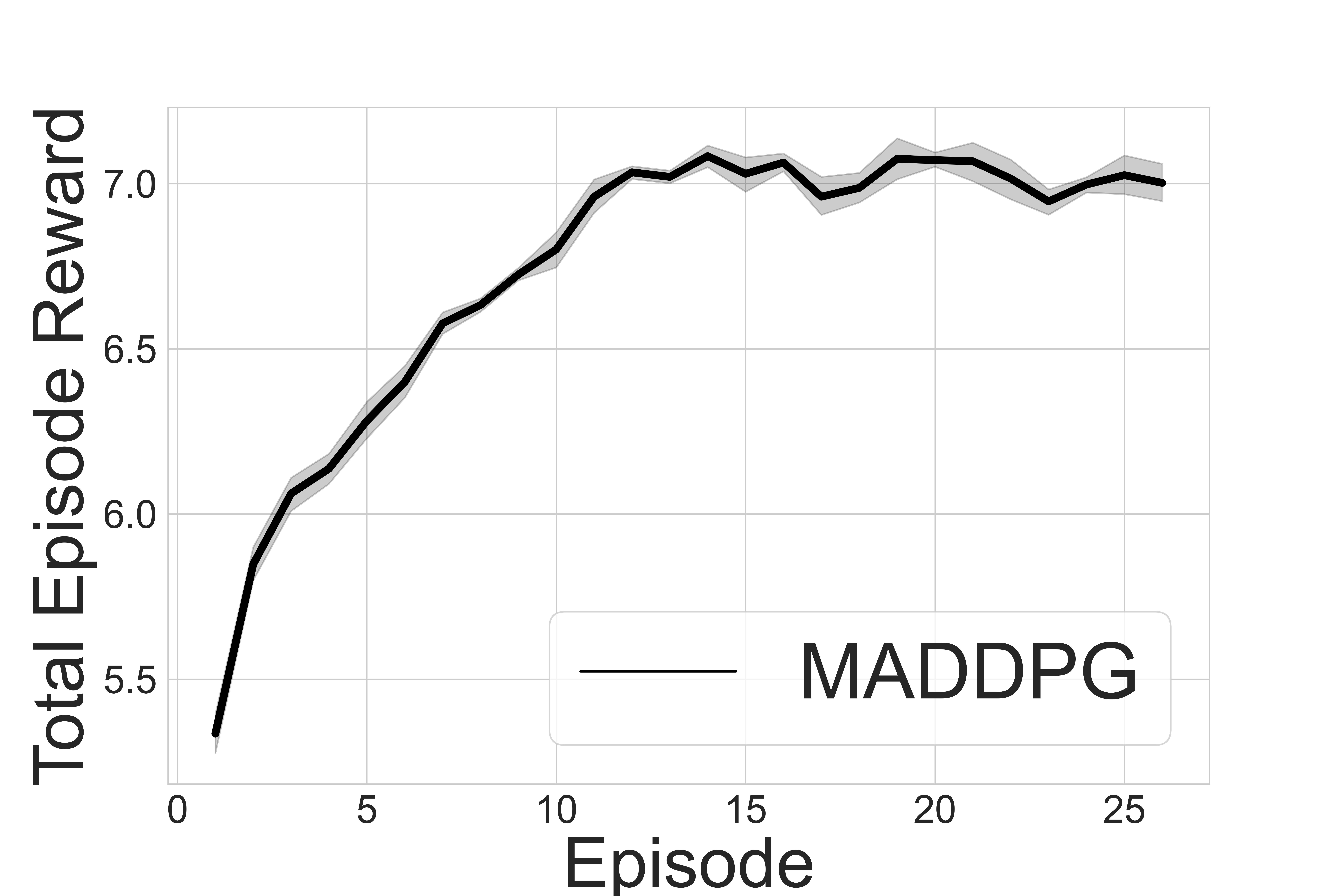}
    \end{minipage}
    }
    \caption{Our approach (safe MARL) gets a higher total episode reward compared to the MADDPG during the training process, because our approach can guarantee a safe training process and run for the maximum episode length.}\label{fig:reward_result}
\end{figure}

\subsection{Obstacle-At-Corner Scenario and Benefit of Shared Vision}

\begin{figure}[h]
  \centering
  \includegraphics[width=3in]{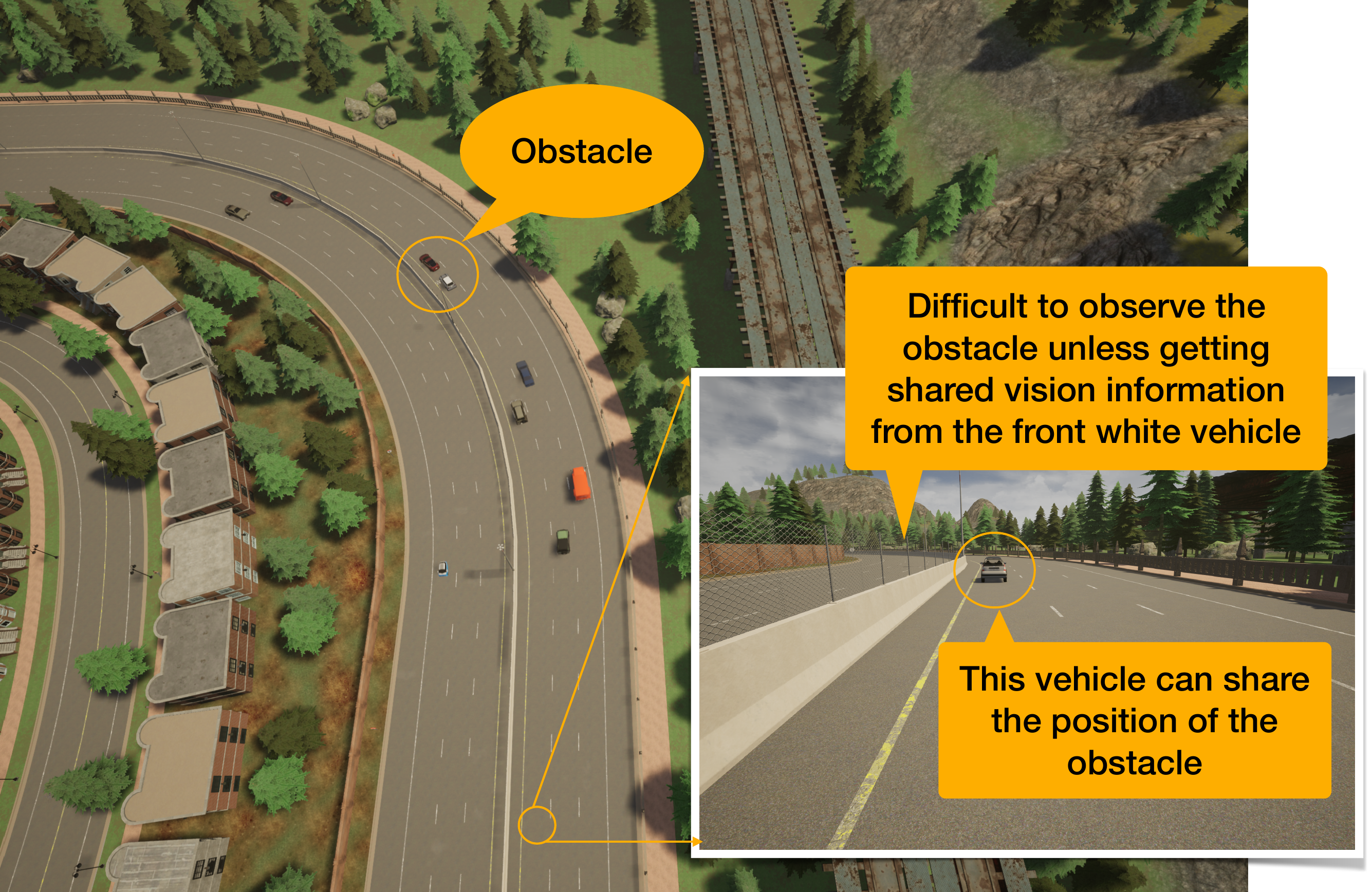}
  \caption{The obstacle-at-corner scenario where there are obstacles in a left-turning corner. The vehicles on the road are autonomous vehicles. The coming vehicles' view is blocked such that they cannot observe the obstacles.}\label{fig:obstacle}
\end{figure}

We construct a scenario called obstacle-at-corner to show how sharing vision information can help autonomous vehicles make wise lane-changing decisions ahead of time. As shown in Fig.~\ref{fig:obstacle}, there are obstacles at a left-turning corner (represented by two stationary vehicles). The right bottom figure shows the view of a vehicle that comes in the direction of this curve road. It is quite difficult to observe the obstacles merely relying on its own sensors. In this case, if the white front vehicle can share its observation, the coming vehicle can get to know there are obstacles before entering the turning corner.

\begin{figure}[h]
  \centering
  \includegraphics[width=1.8in]{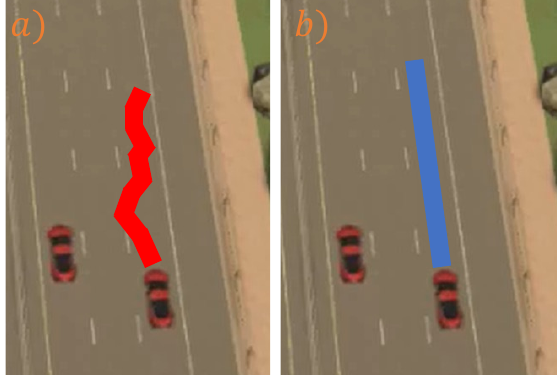}
  \vspace{-8pt}
  \caption{The trajectory is smooth when we separate the learning and control modules in (b), but the trajectory is zigzag when we use the steering angle and acceleration as the action space of our safe actor-critic algorithm in (a).}\label{fig:action_design}
\end{figure}

Separating the learning and control modules in our approach is necessary. One alternative design is to output the control inputs (steering angle and acceleration) directly~\cite{bojarski2016end,kendall2019learning,chen2020conditional,chen2021interpretable}. But it is not suitable for our CAV problem considering the lane-changing behavior. As the subfigure (a) shown in Fig.~\ref{fig:action_design}, we use the same truncated $\mathcal{Q}$-network structure in Fig.~\ref{fig:Qnet}, but the action space of both the critic and the actor networks is replaced by steering angle and acceleration. The trajectory is zigzag because the learned policy does not have a concept of changing/keeping lanes. The learned policy directly outputs the steering angles and accelerations to maximize the vehicle's rewards. Since the steering angle and acceleration are selected by the MARL agents, there is no need to use a controller in subfigure (a). In subfigure (b), the trajectory is smooth when we first find whether to change/keep lanes with the policy learned by Alg.~\ref{alg:actor-critic} and then implement this action by a CBF-QP controller in Eq.~\eqref{equ:QP_noise}.

\begin{figure}[h]
  \centering
  \includegraphics[width=3.2in]{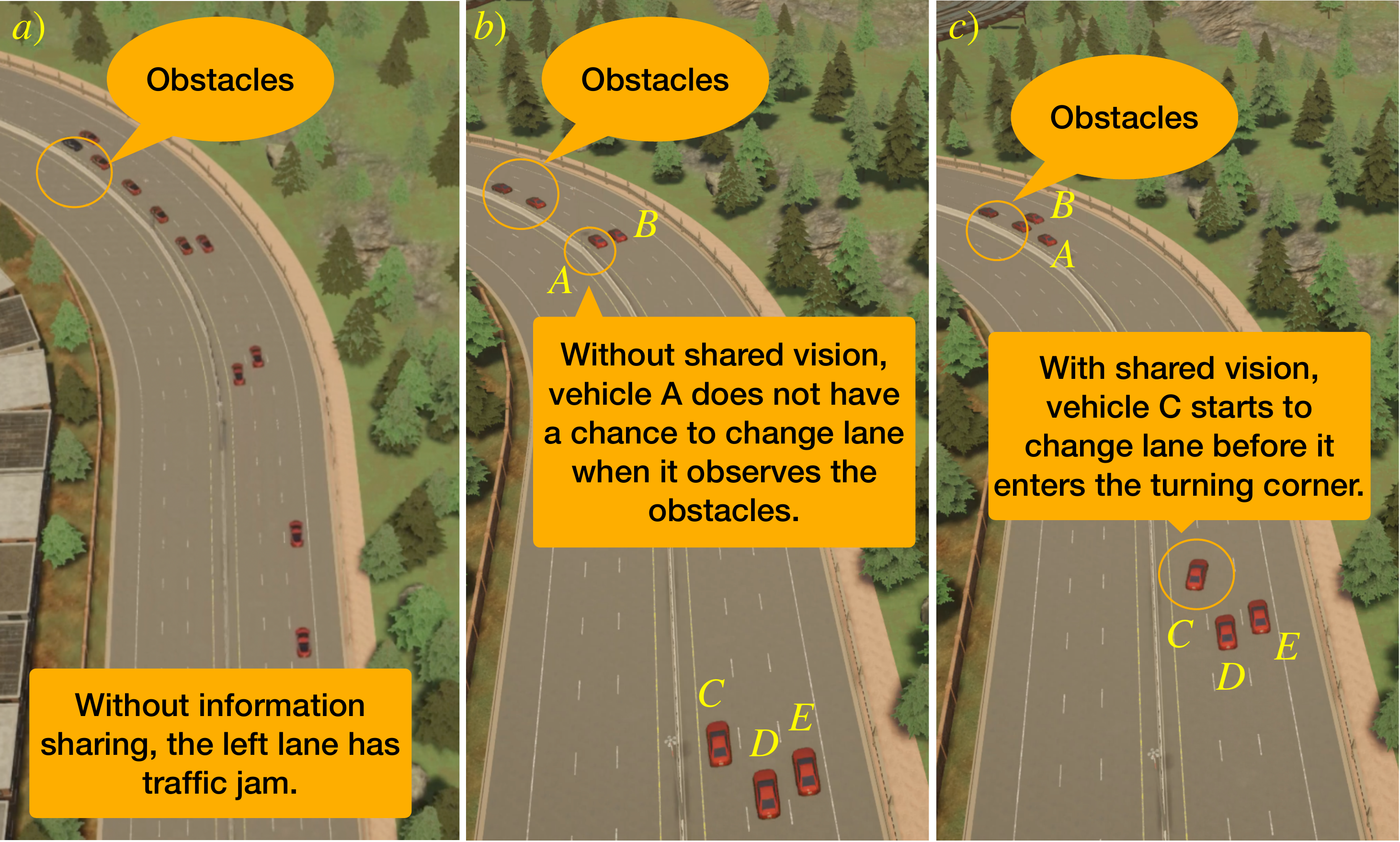}
  \caption{Without any information sharing, there is a traffic jam in the left lane. With shared vision from $A$ or $B$, using our safe MARL policy, vehicle $C$ can change its lane before it enters the left-turning corner.}\label{fig:lanechanging_corner}
\end{figure}

Shared vision with our proposed safe MARL algorithm can help CAVs avoid traffic jams. We test our learned policy in the obstacle-at-corner scenario. When no vehicle can share information in subfigure (a) of Fig.~\ref{fig:lanechanging_corner}, there are more vehicles blocked on the left-most lane and causing a traffic jam. Subfigures (b) and (c) are the screenshots taken in two close timesteps. In subfigure (b), vehicle $A$ is blocked in the left lane because there is vehicle $B$ in the middle lane and $A$ does not realize there are obstacles before it enters the corner (there is no neighboring vehicle that can share the obstacle information in advance in this scene;  two obstacles are not CAVs). In subfigure (c), the coming vehicle $C$ in the left-most lane starts to change to the middle lane before it observes the obstacles because it gets the shared vision information either from $A$ or $B$. We use the safe action mapping introduced in Sec.~\ref{sec:learning_control} to ensure the safety of this lane-changing.

\section{conclusion}
\label{sec:conclusion}
How to utilize V2V communication to improve traffic efficiency while satisfying safety requirements is a challenging problem for the behavior planning and control of CAVs. We design a safe behavior learning and control framework, which utilizes shared states and actions to safely explore a behavior planning policy.  We design two new techniques: truncated $\mathcal{Q}$-function and the safe action mapping. The truncated $\mathcal{Q}$-function utilizes the information-sharing capability of the CAVs instead of depending on the global states and actions, and we prove that the approximation error is bounded. The safe action mapping guarantees the safety of the training and execution process of the proposed MARL framework. In experiments, our weighted-pruned CNN technique increases the speed of the 3D object detection (OD) by 3.5$\times$ with small accuracy degradation to support the learning of the truncated $\mathcal{Q}$-function. We also show that our approach improves the average velocity and average comfort under different CAV ratios and different traffic densities. Our approach avoids unsafe actions and maintains a safe distance from neighboring vehicles. We also construct the obstacle-at-corner scenario to show that the shared vision can help vehicles to avoid traffic jams. It is considered as future work to improve the robustness of the MARL policy under state uncertainties that are caused by V2V communication or sensor measurement errors.

\appendices



\section{Physical Dynamic Model}
\label{sec:physical_model}
The physical dynamics of a vehicle are described by a kinematic bicycle model that achieves a good balance between accuracy and complexity \cite{brito2019model,rajamani2011vehicle}. The kinematic bicycle model is a widely-used model in literature~\cite{kim2021model,lindemann2021learning} for the MPC-based and CBF-based controller design in the CARLA simulator. The discrete-time equations are obtained by applying an explicit Euler method with a sampling time $T_s = 0.01s$:
\begin{equation}
x_{t+1} = \begin{matrix} \underbrace{ \begin{bmatrix}
x_1(t) + x_4(t) \cos(x_3(t)) T_s \\
x_2(t) + x_4(t) \sin(x_3(t)) T_s \\
x_3(t) \\
x_4(t) 
\end{bmatrix} } \\ f(x_t) \end{matrix}  + \begin{matrix} \underbrace{ \begin{bmatrix}
0 & 0 \\
0 & 0 \\
\frac{x_4(t) T_s}{L} & 0 \\
0 & T_s 
\end{bmatrix} } \\ g(x_t) \end{matrix}  u_t + w_t,
\label{equ:bicycle}
\end{equation}

where $x = [p_x, p_y, \psi, v]^\mathsf{T} \in \mathcal{X}$ includes the two coordinates of the center of gravity (CoG), its orientation and velocity. The input $u = [\tan(\delta), \dot{v}] \in \mathcal{U}$ is the tangent of the steering angle and acceleration. The parameter $L = 2.51 m$ is the distance between the front and rear axles.




\section{Exponential Decay Property}
\label{sec:exponential_decay}
As defined in Definition~\ref{def:exponential_decay}, the $(c,\rho)$-exponential decay property for $Q^i$-function means there exists some $c > 0$ and $ 0<\rho <1$ such that for any $i\in \mathcal{N}, \allowbreak \forall s^{N^i_\kappa} \in \mathcal{S}^{N^i_\kappa}, \allowbreak \forall s^{N^{-i}_\kappa} \in \mathcal{S}^{N^{-i}_\kappa}, \allowbreak \forall s'^{N^{-i}_\kappa} \in \mathcal{S}^{N^{-i}_\kappa}, \allowbreak \forall a^{N^i_\kappa} \in \mathcal{A}^{N^i_\kappa}, \allowbreak \forall a^{N^{-i}_\kappa} \in \mathcal{A}^{N^{-i}_\kappa}, \allowbreak \forall a'^{N^{-i}_\kappa} \in \mathcal{A}^{N^{-i}_\kappa}$, it holds that $|Q^i(s^{N^i_\kappa}, s^{N^{-i}_\kappa}, a^{N^i_\kappa}, a^{N^{-i}_\kappa})- Q^i(s^{N^i_\kappa}, s'^{N^{-i}_\kappa}, a^{N^i_\kappa}, a'^{N^{-i}_\kappa})| \le c \rho^{\kappa}$.

Here we prove the following theorem that gives the condition when the exponential decay property holds. 
\begin{theorem}
If for all $i \in \mathcal{N}$, $r^i$ is upper bounded by $\bar{r}$, then the $(\frac{\bar{r}}{1- \gamma}, \sqrt{\gamma})$-exponential decay property holds for $Q^i$ in \eqref{equ:q_function} under Assumption~\ref{aspt:independence}, where $\gamma$ is the discount factor.
\end{theorem}
\begin{proof}
Denote $s = (s^{N^i_\kappa}, s^{N^{-i}_\kappa})$, $a = (a^{N^i_\kappa}, a^{N^{-i}_\kappa})$, $s' = (s^{N^i_\kappa}, s'^{N^{-i}_\kappa})$, $a' = (a^{N^i_\kappa}, a'^{N^{-i}_\kappa})$. We use $p^i_k$ to represent the distribution of the state-action pair $(s^i_k, a^i_k)$ under some localized policy $\pi^i(a^i|s^{N^i})$ for vehicle $i$ with the initial condition $(s_0, a_0) = (s, a)$. Here the policy $\pi^i$ can be any feasible policy that is not required to be the optimal policy. We use $p_{k}'^i$ to represent the distribution of the state-action pair $(s^i_k, a^i_k)$ under the same policy $\pi^i$ for vehicle $i$ with the initial condition $(s_0, a_0) = (s', a')$. Because each vehicle $i$'s next state $s^i_{k+1}$ is independently generated and only depends on its 1-hop neighbors, that is to say,
\begin{equation}
    P(s_{k+1} | s_k, a_k) = \prod_{i=1}^{n} P(s^i_{k+1} | s^{N^i} _k, a^{N^i} _k),
\end{equation}
and the localized policy $\pi^i$ only depends on $s^{N^i}$, we have $p^i_k = p_{k}'^i$ for $k \leq \lfloor \kappa/2 \rfloor$. 

Based on the definition of the action-value function, we have 
\begin{align}
    & \left| Q^i(s, a) - Q^i(s', a') \right| \nonumber \\
    \leq & \sum_{k=0}^\infty \bigg| \Expect_{a_k^i \sim \pi(a^i |s^{N^i})} \left[ \gamma^k r^i_{k+1}(s^i_k, a^i_k) \mid s_o = s, a_0 = a\right] \nonumber \\
    & - \Expect_{a_k^i \sim \pi(a^i |s^{N^i})} \left[ \gamma^k r^i_{k+1}(s^i_k, a^i_k) \mid s_o = s', a_0 = a' \right] \bigg| \nonumber \\
    = & \sum_{k=0}^\infty \big| \gamma^k \Expect_{(s^i, a^i) \sim p^i_k} \left[r^i_{k+1}(s^i_k, a^i_k) \right] \nonumber \\
    & - \gamma^k \Expect_{(s^i, a^i) \sim p'^i_{k}} \left[r^i_{k+1}(s^i_k, a^i_k)\right] \big| \nonumber \\
    = & \sum_{k= \lfloor \kappa/2 \rfloor + 1}^\infty \left| \gamma^k \Expect_{p^i_k} \left[r^i_{k+1}(s^i_k, a^i_k)\right] - \gamma^k \Expect_{ p'^i_{k}} \left[r^i_{k+1}(s^i_k, a^i_k)\right] \right| \nonumber \\
    \leq & \sum_{k= \lfloor \kappa/2 \rfloor + 1}^\infty  \gamma^k \bar{r} \times \text{TV}(p^i_k, p'^i_{k}) \nonumber \\
    \leq & \frac{\bar{r}}{1- \gamma} \gamma^{\lfloor \kappa/2 \rfloor  + 1} \leq \frac{\bar{r}}{1- \gamma} \gamma^{\kappa/2}, 
\end{align}
where $\text{TV}(p^i_k, p'^i_{k})$ is the total variation distance between $p^i_k$ and $p'^i_{k}$ and upper bounded by $1$ based on the definition of total variation~\cite{grimmett2020probability}. 
This shows the the $(\frac{\bar{r}}{1- \gamma}, \sqrt{\gamma})$-exponential decay property holds for $Q^i$ according to the definition of the exponential decay property.
\end{proof}

\section{CBF-based Quadratic Programming}
\label{sec:CBF_QP}

In this section, we show how we can formulate a Quadratic Programming (QP) based on the CBF condition \eqref{equ:CBF} to guarantee the safety of a system without noise defined as follows:
\begin{equation}
    x_{t+1} = f(x_t) + g(x_t) u_t, 
    \label{equ:low_sys_equation}
\end{equation}
The CBF-QP for this system that can be solved efficiently at each timestep is formulated as:
\begin{equation}
\begin{aligned}
& \underset{u_t}{\argmin}
& & \norm{u_t - U \cdot a}^2 \\
& \text{s.t.}
& & p^\mathsf{T} f(x_t) + p^\mathsf{T} g(x_t) u_t + q \geq (1 - \eta) h(x_t).
\end{aligned}
\label{equ:QP_origin}
\end{equation}

\begin{lemma}
For system~\eqref{equ:low_sys_equation}, if there exists $\eta \in (0,1]$ such that the QP \eqref{equ:QP_origin} has a solution for all $x_t \in \mathcal{C}$ ($\mathcal{C}$ is defined in \eqref{equ:safe_set}), then the controller derived from \eqref{equ:QP_origin} renders set $\mathcal{C}$ forward invariant.
\label{lem:forward_invariant}
\end{lemma}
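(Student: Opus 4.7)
The plan is to exploit the affine form of the barrier function $h(x) = p^\mathsf{T} x + q$ together with the control-affine dynamics \eqref{equ:low_sys_equation} to rewrite the QP constraint in \eqref{equ:QP_origin} as a one-step forward-invariance condition on $h$, and then close the argument by induction on the controller time step $t$.

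First I would substitute the dynamics into $h$ to observe that, for any admissible $u_t$,
\begin{equation*}
h(x_{t+1}) = p^\mathsf{T}\bigl(f(x_t) + g(x_t) u_t\bigr) + q = p^\mathsf{T} f(x_t) + p^\mathsf{T} g(x_t) u_t + q.
\end{equation*}
Hence the QP constraint $p^\mathsf{T} f(x_t) + p^\mathsf{T} g(x_t) u_t + q \geq (1-\eta) h(x_t)$ is literally the inequality $h(x_{t+1}) \geq (1-\eta)\, h(x_t)$, which is the discrete-time CBF condition \eqref{equ:CBF} rewritten in closed-loop form. The feasibility hypothesis of the lemma guarantees that, for every $x_t \in \mathcal{C}$, the minimizer $u_t^\star$ of \eqref{equ:QP_origin} exists and satisfies this constraint, so the closed-loop trajectory produced by the QP controller obeys $h(x_{t+1}) \geq (1-\eta)\, h(x_t)$ whenever $x_t \in \mathcal{C}$.

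Next I would carry out the induction on $t$. The base case is immediate: if $x_0 \in \mathcal{C}$, then $h(x_0) \geq 0$ by definition of $\mathcal{C}$ in \eqref{equ:safe_set}. For the inductive step, suppose $x_t \in \mathcal{C}$, so $h(x_t) \geq 0$. Because $\eta \in [0,1]$ we have $1-\eta \geq 0$, and therefore $(1-\eta)\, h(x_t) \geq 0$. Combining this with the QP-induced inequality gives $h(x_{t+1}) \geq (1-\eta)\, h(x_t) \geq 0$, so $x_{t+1} \in \mathcal{C}$. Iterating yields $x_t \in \mathcal{C}$ for all $t \geq 0$, which is exactly the forward invariance of $\mathcal{C}$.

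There is no serious technical obstacle here; the only subtle point is that the argument rests entirely on the standing feasibility assumption of the lemma (without it the QP could return no $u_t$ at some $x_t \in \mathcal{C}$ and the induction step would fail). I would therefore explicitly flag, at the start of the induction, that the hypothesis ``a solution exists for all $x_t \in \mathcal{C}$'' is precisely what licenses extracting a $u_t^\star$ satisfying the CBF inequality at every reachable state, and that the affine choice of $h$ is what turns the pointwise CBF condition into the concrete linear constraint appearing in \eqref{equ:QP_origin}.
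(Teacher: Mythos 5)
Your proposal is correct and follows essentially the same argument as the paper: substitute the control-affine dynamics into the affine barrier $h$ so the QP constraint reads $h(x_{t+1}) \geq (1-\eta)h(x_t)$, then use $h(x_t) \geq 0$ and $\eta \in [0,1]$ to conclude $h(x_{t+1}) \geq 0$. The only difference is that you make the induction over $t$ and the role of the feasibility hypothesis explicit, which the paper leaves implicit.
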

\begin{proof}
For any $x_t \in \mathcal{C}$, we have $h(x_t) \geq 0$ according to the Definition~\ref{def:safe_set}.
Therefore, 
\begin{equation}
\begin{aligned}
h(x_{t+1}) &= p^\mathsf{T} f(x_t) + p^\mathsf{T} g(x_t) u_t + q \\
&\geq (1 - \eta) h(x_t) \geq 0.
\end{aligned}
\end{equation}

Thus, $x_{t+1} \in \mathcal{C}$ and $\mathcal{C}$ is forward invariant.
\end{proof}

We can relax the constraint in \eqref{equ:QP_origin}, and the forward invariant property holds for a larger set. Consider:
\begin{equation}
\begin{aligned}
& \underset{u_t, \zeta}{\argmin} & & \norm{u_t - U \cdot a}^2 + M \zeta\\
& \text{s.t.} & & p^\mathsf{T} f(x_t) + p^\mathsf{T} g(x_t) u_t + q \geq (1 - \eta) h(x_t) - \zeta \\
& & & \zeta \geq 0,
\end{aligned}
\label{equ:QP_relaxed}
\end{equation}
where $M$ is a large constant.

\begin{theorem}
For system~\eqref{equ:low_sys_equation}, if there exists $\eta \in (0,1]$ such that the QP \eqref{equ:QP_relaxed} has a solution for all $x_t \in \mathcal{C}$ and the solution satisfies $\zeta \leq Z$, then the controller derived from \eqref{equ:QP_relaxed} renders set $\mathcal{C}': \{x \in \mathbb{R}^{n_x} \mid h'(x) = h(x) + \frac{Z}{\eta} \geq 0\}$ forward invariant.
\label{the:invariant_relaxed}
\end{theorem}
\begin{proof}
Since $x_t \in \mathcal{C}$, we have $h(x_t) \geq 0$ according to the Definition~\ref{def:safe_set}. 
From $Z \geq \zeta \geq 0 \geq - \eta h(x_t)$, we have $h(x_t) + \frac{Z}{\eta} \geq 0$. Thus, $x_t \in \mathcal{C}'$ with $h'(x_t) \geq 0$. Also, we have
\begin{equation}
\begin{aligned}
h(x_{t+1}) &= p^\mathsf{T} f(x_t) + p^\mathsf{T} g(x_t) u_t + q \\
& \geq (1 - \eta) h(x_t) - \zeta \geq (1 - \eta) h(x_t) - Z, \\
\end{aligned}
\end{equation}
\begin{equation}
\begin{aligned}
h(x_{t+1}) + \frac{Z}{\eta} & \geq (1 - \eta) \left[h(x_t) + \frac{Z}{\eta} \right], \\
h'(x_{t+1}) & \geq (1 - \eta) h'(x_t) \geq 0.
\end{aligned}
\end{equation}
Thus, $x_{t+1} \in \mathcal{C}'$ and $\mathcal{C}'$ is forward invariant.
Note that it is simplified to be Lemma~\ref{lem:forward_invariant} with set $\mathcal{C} = \mathcal{C}'$ when $Z = 0$.
\end{proof}

The value of $Z$ denotes how large the CBF condition \eqref{equ:CBF} is violated from the original $h(x_t)$. In this case, the safety condition should be formulated according to the set $\mathcal{C}'$. 


\bibliographystyle{IEEEtran}
\bibliography{IEEEabrv,connectedV}
\vspace{-50pt}
\begin{IEEEbiography}[{\includegraphics[width=1in,height=1.25in,clip,keepaspectratio]{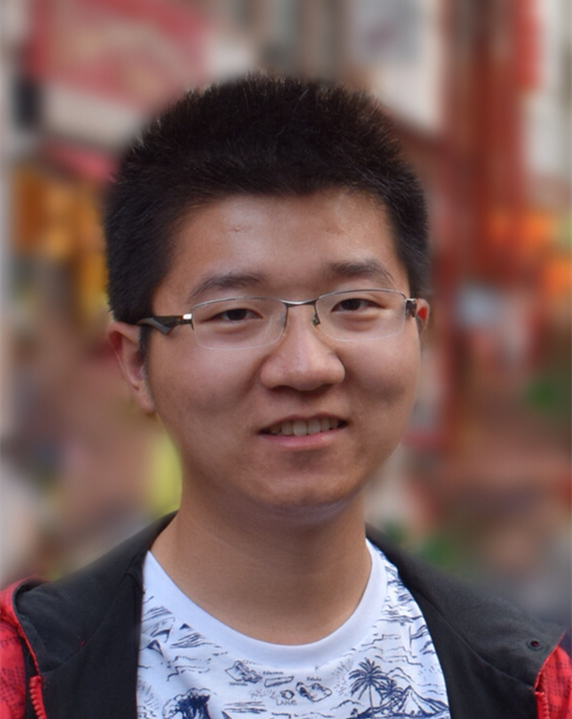}}]{Songyang Han}(S'17) received the B.E. degree in automation from Nanjing University, Nanjing, China, in 2015, and the M.S. degree in electrical and computer engineering from the University of Michigan-Shanghai Jiao Tong University Joint Institute, Shanghai Jiao Tong University, Shanghai, China, in 2018. He is currently working toward the Ph.D. degree in computer science and engineering at the University of Connecticut, Storrs, CT, USA. His current research interests include reinforcement learning, artificial intelligence, deep learning, autonomous driving, computer vision, and game theory. He is the recipient of the Best Paper Award at the 12th ACM/IEEE International Conference on Cyber-Physical Systems.
\end{IEEEbiography}
\vskip -1\baselineskip plus -1fil

\begin{IEEEbiography}[{\includegraphics[width=1in,height=1.25in,clip,keepaspectratio]{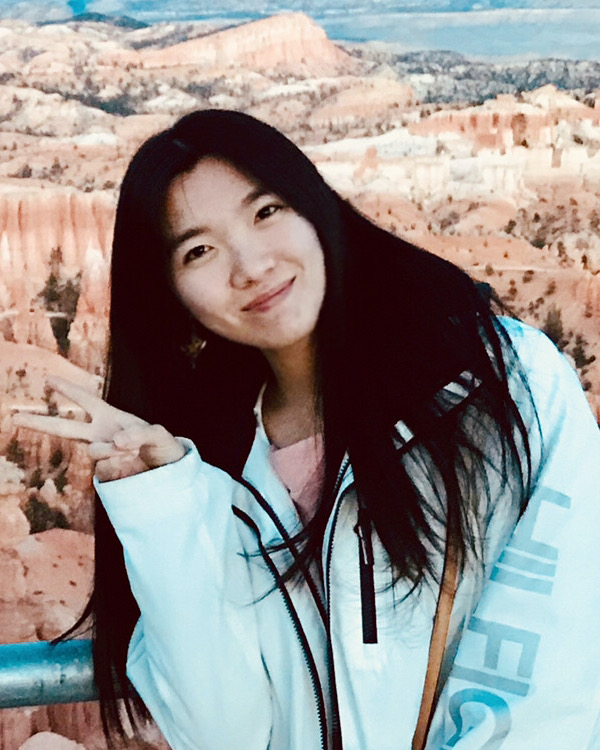}}]{Shanglin Zhou} received the B.S. degree in statistics from Minzu University of China, Beijing, China, in 2015, and the M.S. degree in statistics from the University of Connecticut, Storrs, CT, USA, in 2017. She is currently working toward the Ph.D. degree in computer science and engineering at the University of Connecticut, Storrs, CT, USA. Her current research interests include deep learning model compression and the application of computer vision.
\end{IEEEbiography}
\vskip -1\baselineskip plus -1fil

\begin{IEEEbiography}[{\includegraphics[width=1in,height=1.25in,clip,keepaspectratio]{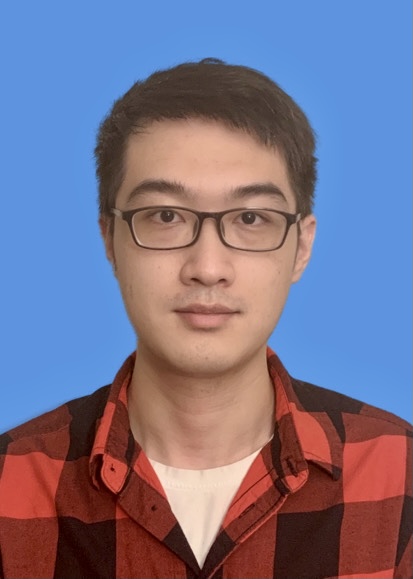}}]{Jiangwei Wang}(S'19) received the B.S. degree in electrical and computer engineering from Xi'an Jiaotong University, Xi'an, China, in 2018. He is currently working toward the Ph.D. degree in electrical and computer engineering at the University of Connecticut, Storrs, CT, USA. His current research interests are in cyber-physical systems and autonomous driving. He was also a recipient of Outstanding Reviewer for the IEEE Transactions on Sustainable Energy and IEEE Journal of Oceanic Engineering in 2019.
\end{IEEEbiography}
\vskip -1\baselineskip plus -1fil

\begin{IEEEbiography}[{\includegraphics[width=1in,height=1.25in,clip,keepaspectratio]{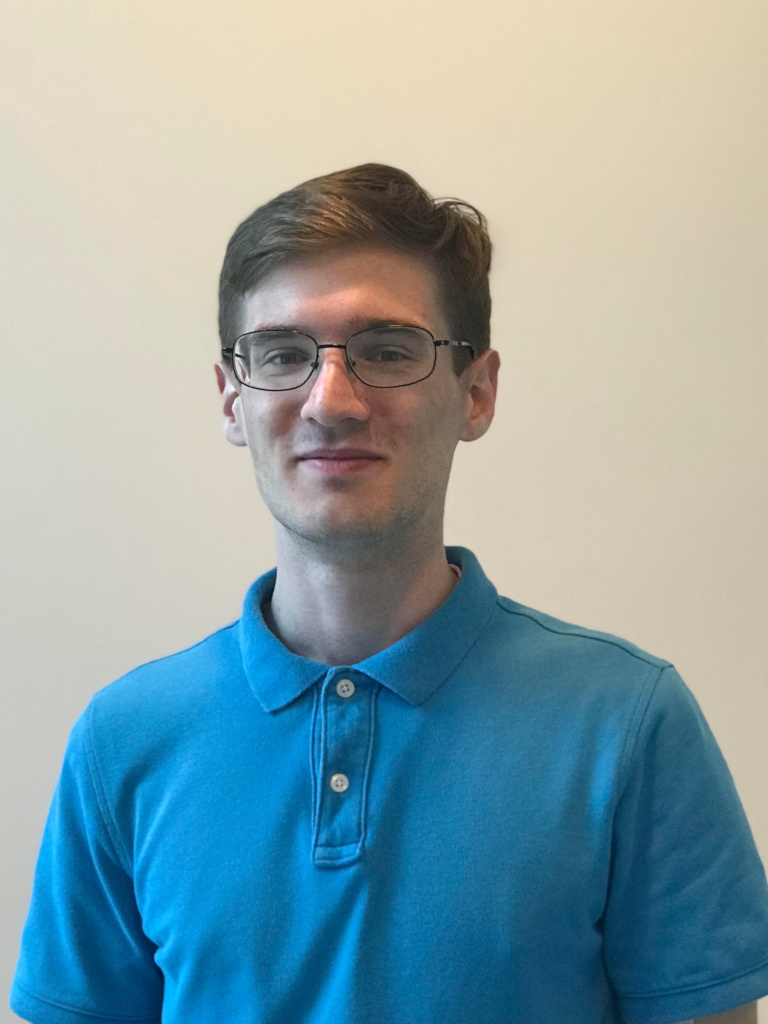}}]{Lynn Pepin} received the B.E. degree in computer science and engineering from the University of Connecticut, Storrs, CT, USA, in 2018. He is currently working toward the Ph.D. degree in computer science at the University of Connecticut, Storrs, CT, USA. His current research interests include cybersecurity and machine learning research in the area of power systems and autonomous vehicles.
\end{IEEEbiography}
\vskip -1\baselineskip plus -1fil

\begin{IEEEbiography}[{\includegraphics[width=1in,height=1.25in,clip,keepaspectratio]{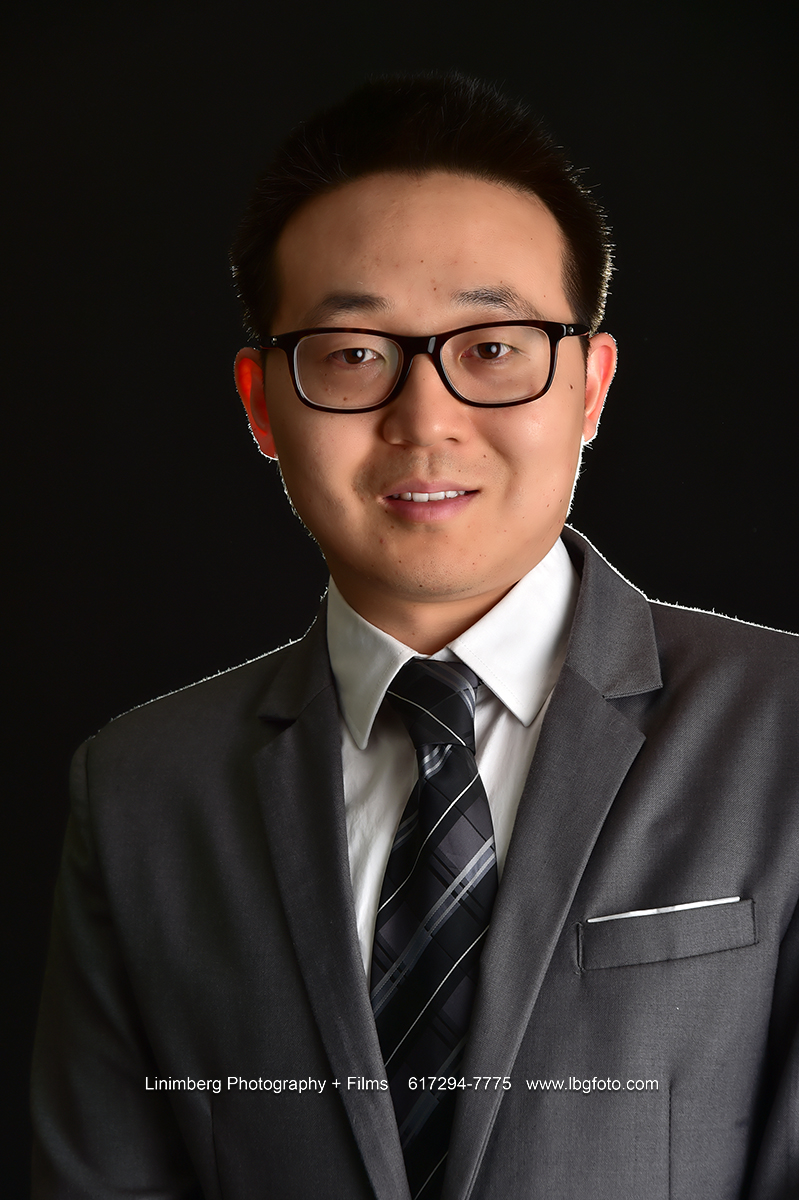}}]{Caiwen Ding} received the Ph.D. degree in computer science and engineering from the Northeastern University, Boston, MA, USA, in 2019. He is currently an Assistant Professor in the Department of Computer Science and Engineering at the University of Connecticut, Storrs, CT, USA. His research interests include machine learning and deep neural network systems, computer vision, and natural language processing. Dr. Ding is the recipient of the Best Paper Award Nomination at DATE 2018 and DATE 2021.
\end{IEEEbiography}
\vskip -1\baselineskip plus -1fil

\begin{IEEEbiography}[{\includegraphics[width=1in,height=1.25in,clip,keepaspectratio]{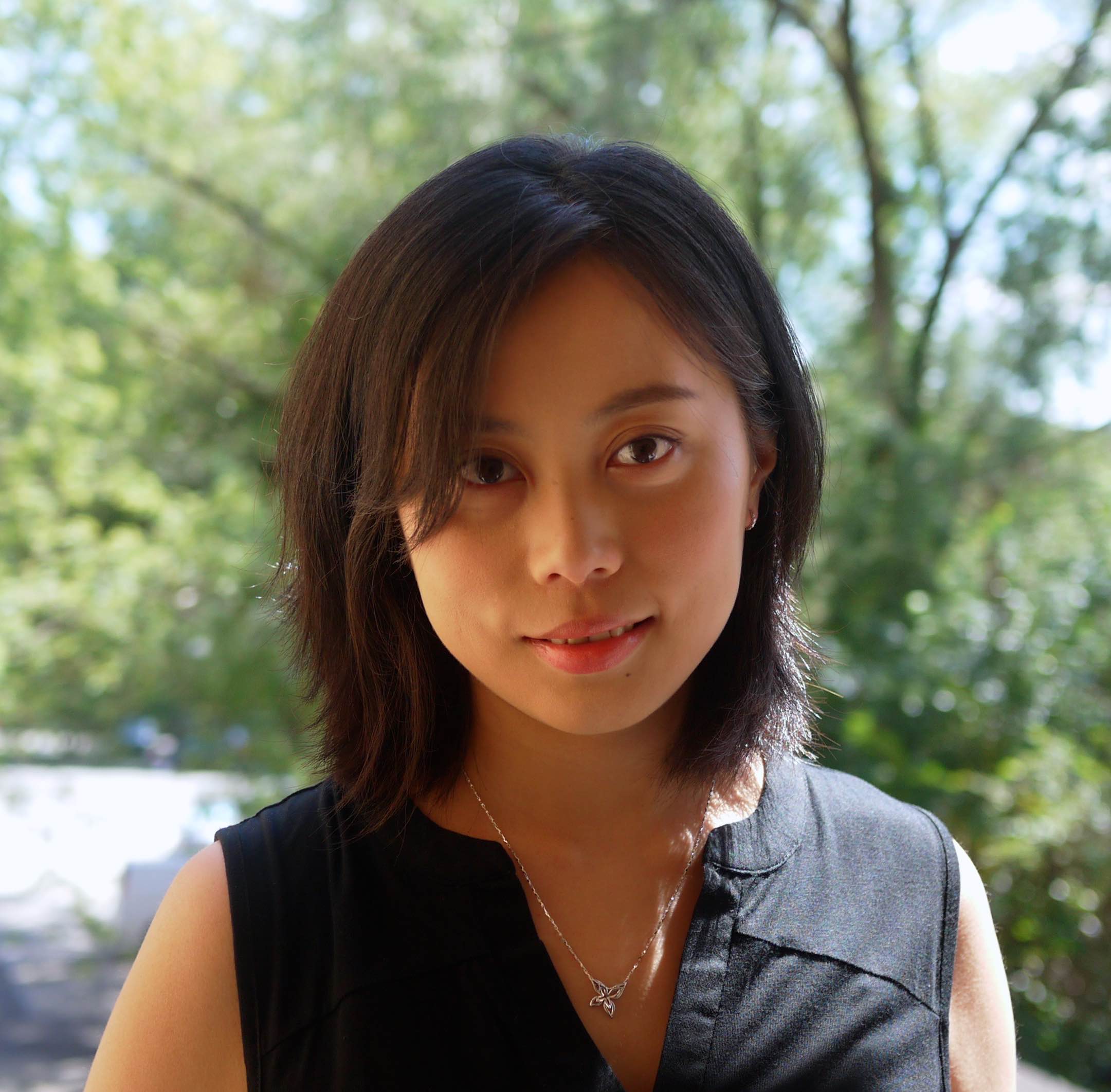}}]{Jie Fu} received the M.S. degree in Electrical Engineering and Automation from Beijing Institute of Technology, Beijing, China, in 2009, and the Ph.D. degree in Mechanical Engineering from the University of Delaware, Newark, DE, USA, in 2013. She is currently an Assistant Professor in the Department of Electrical and Computer Engineering at University of Florida, Gainesville, FL, USA 32605. Previously, she was a Postdoc Researcher in the Department of Electrical and Systems Engineering at the University of Pennsylvania. Her research interests include control theory, formal methods, and machine learning, with applications to robotic systems and cyber-physical systems.
\end{IEEEbiography}
\vskip -1\baselineskip plus -1fil

\begin{IEEEbiography}[{\includegraphics[width=1in,height=1.25in,clip,keepaspectratio]{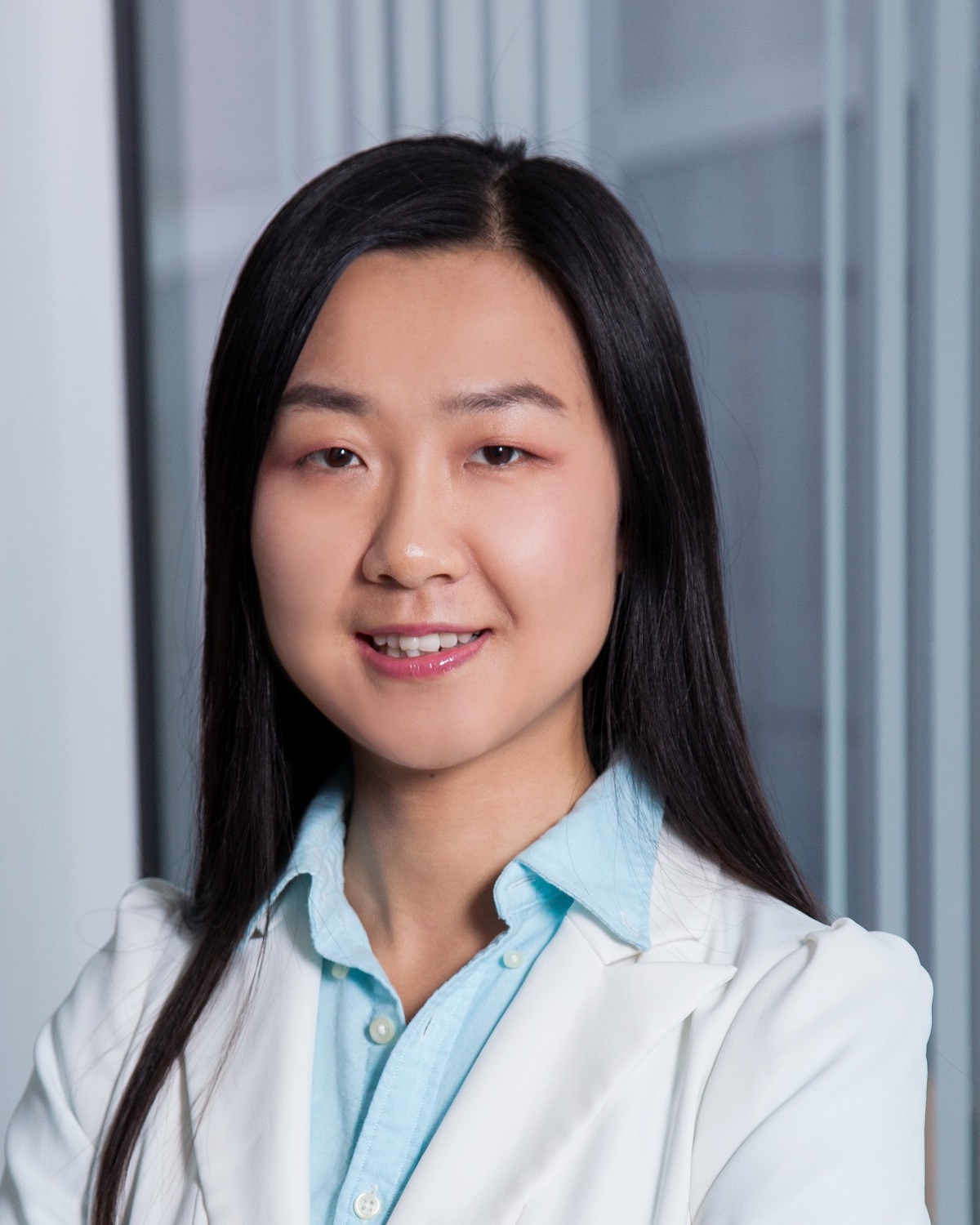}}]{Fei Miao} (S'13-M'16) received the B.S. degree in automation from Shanghai Jiao Tong University, Shanghai, China, in 2010, and the M.A. degree in statistics and the Ph.D. degree with the ``Charles Hallac and Sarah Keil Wolf Award for Best Doctoral Dissertation" in electrical and systems engineering both from the University of Pennsylvania, Philadelphia, PA, USA, in 2015 and 2016. She is currently an Assistant Professor in the Department of Computer Science and Engineering at the University of Connecticut, Storrs, CT, USA. Previously, she was a Postdoc Researcher in the Department of Electrical and Systems Engineering at the University of Pennsylvania. Her research interests include learning and control of cyber-physical systems under model uncertainties, and CPS security. Dr. Miao is the recipient of the NSF CAREER Award, and Best Paper Award Finalist at the 6th and 12th ACM/IEEE International Conference on Cyber-Physical Systems.
\end{IEEEbiography}

\end{document}